  \providecommand\BibTeX{{%
    \normalfont B\kern-0.5em{\scshape i\kern-0.25em b}\kern-0.8em\TeX}}}
\newtheorem{theorem}{Theorem}[section]
\begin{document}

\title[Fast Variational AutoEncoder with Inverted Multi-Index]{Fast Variational AutoEncoder with Inverted Multi-Index for Collaborative Filtering}

\author{Jin Chen}
\authornote{This work was done when the author Jin Chen was at University of Science and Technology of China for intern.}
\affiliation{
  \country{University of Electronic Science and Technology of China}
  \city{University of Science and Technology of China}
  }
\email{chenjin@std.uestc.edu.cn}

\author{Defu Lian}
\authornote{Corresponding author}
\affiliation{
	\country{University of Science and Technology of China}}
\email{liandefu@ustc.edu.cn}

\author{Binbin Jin}
\affiliation{
  \country{Huawei Cloud Computing Technologies Co., Ltd.}}
\email{jinbinbin1@huawei.com}

\author{Xu Huang}
\affiliation{
	\country{University of Science and Technology of China}}
\email{angus_huang@mail.ustc.edu.cn}

\author{Kai Zheng}
\affiliation{
	\country{University of Electronic Science and Technology of China}}
\email{zhengkai@uestc.edu.cn}

\author{Enhong Chen}
\affiliation{
	\country{University of Science and Technology of China}}
\email{cheneh@ustc.edu.cn}

\begin{abstract}
  Variational AutoEncoder (VAE) has been extended as a representative nonlinear method for collaborative filtering. However, the bottleneck of VAE lies in the softmax computation over all items, such that it takes linear costs in the number of items to compute the loss and gradient for optimization. This hinders the practical use due to millions of items in real-world scenarios. Importance sampling is an effective approximation method, based on which the sampled softmax has been derived. However, existing methods usually exploit the uniform or popularity sampler as proposal distributions, leading to a large bias of gradient estimation. To this end, we propose to decompose the inner-product-based softmax probability based on the inverted multi-index, leading to sublinear-time and highly accurate sampling. Based on the proposed proposals, we develop a fast Variational AutoEncoder (FastVAE) for collaborative filtering. FastVAE can outperform the state-of-the-art baselines in terms of both sampling quality and efficiency according to the experiments on three real-world datasets.
\end{abstract}

\begin{CCSXML}
  <ccs2012>
     <concept>
         <concept_id>10002951.10003317.10003347.10003350</concept_id>
         <concept_desc>Information systems~Recommender systems</concept_desc>
         <concept_significance>500</concept_significance>
         </concept>
   </ccs2012>
\end{CCSXML}
\ccsdesc[500]{Information systems~Recommender systems}

\keywords{Sampling, Inverted Multi-Index, Recommender Systems, Variational AutoEncoder, Collaborative Filtering}

\renewcommand{\shortauthors}{Jin Chen, Defu Lian, Binbin Jin, Xu Huang, et al.}

\maketitle
\section{Introduction}
\label{sec:introduction}

Recommendation techniques play a role in information filtering to address the information overload in the era of big data. After decades of development, recommendation techniques have shifted from the latent linear models to deep non-linear models for modeling side features and feature interactions among sparse features. Variational AutoEncoder~\cite{Kingma2014} has been extended as a representative nonlinear method (Mult-VAE) for recommendation~\cite{liang2018variational}, and received much attention among the recommender system community in recent years~\cite{rakesh2019linked, sachdeva2019sequential,shenbin2020recvae,tang2019correlated,yu2019vaegan}. 
Mult-VAE encodes each user's observed data with a Gaussian-distributed latent factor and decodes it to a probability distribution over all items, which is assumed a softmax of the inner-product-based logits. Mult-VAE then exploits multinomial likelihood as the objective function for optimization, which has been proved to be more tailored for implicit feedback than the Gaussian and logistic likelihood. 

However, the bottleneck of Mult-VAE lies in the log-partition function over the logits of all items in the multinomial likelihood. The time to compute the loss and gradient in each training step grows linearly with the number of items. When there are an extremely large number of items, the training of Mult-VAE is time-consuming, making it impractical in real recommendation scenarios. To address this problem, self-normalized importance sampling is used for approximation~\cite{bengio2008adaptive,jean2015using} since the exact gradient involves computing expectation with respect to the softmax distribution. The approximation of the exact gradient leads to the efficient sampled softmax, but it does not converge to the same loss as the softmax. The only way to eliminate the bias is to treat the softmax distribution as the proposal distribution, but it is not efficient.

In spite of the well-known importance of a good proposal, many existing methods still often use simple and static distributions, like uniform or popularity-based distribution~\cite{mikolov2013efficient}. The problem of these proposals lies in large divergence from the softmax distribution, so that they need a large number of samples to achieve a low bias of gradient. The recent important method is to use quadratic kernel-based distributions~\cite{blanc2018adaptive} as the proposal, which are not only closer to the softmax distribution, but also efficient to sample from. However, the quadratic kernel is not always a good approximation of the softmax distribution, and it suffers from a large memory footprint due to the feature mapping of the quadratic kernel. 

Recently maximum inner product search (MIPs) algorithms have been widely used for fast top-k recommendation with low accuracy degradation~\cite{morozov2018non,ram2012maximum,shrivastava2014asymmetric}, but they always return the same results to the same query so that they can not be directly applied for item sampling. On this account, the MIPs indexes have been constructed over the randomly perturbed database for probabilistic inference in log-linear models and become a feasible solution to sample from the softmax distribution~\cite{mussmann2016learning}. However, this not only increases both data dimension and sample size, but also makes the samples correlated. Moreover, this may also require to rebuild the MIPs index from scratch once the model gets updated, which has a significant impact on the training efficiency. Therefore, 
it is necessary to design sampling algorithms tailored for the MIPs indexes.

To this end, based on the popular MIPs index -- inverted multi-index~\cite{babenko2014inverted}, we propose a series of proposal distributions, from which items can be efficiently yet independently sampled, to approximate the softmax distribution. The basic idea is to decompose item sampling into multiple stages. In each except the last stage, only a cluster index is sampled given the previously sampled clusters. In the last stage, items being simultaneously assigned to these sampled clusters are sampled according to uniform, popularity, or residual softmax distribution. Since there are a few items left, items are sampled from these approximated distributions in sublinear or even constant time. In some cases, the decomposed sampling is as exact as sampling from softmax, such that the quality of sampled items can be guaranteed. These samplers are then adopted to efficiently train Variational AutoEncoder for collaborative filtering (FastVAE for short). FastVAE\footnote{The code is released in https://github.com/HERECJ/FastVae\_Gpu} is evaluated extensively on three real-world datasets, demonstrating that FastVAE outperforms the state-of-the-art baselines in terms of sampling quality and efficiency.

The contributions can be summarized as follows:
\begin{itemize}[leftmargin=*]
    \item To the best of our knowledge, we discover high-quality approximated softmax distributions for the first time, by decomposing the softmax probability based on the inverted multi-index.
    \item We design an efficient sampling process for these approximate softmax distributions, from which items can be independently sampled in sublinear or even constant time. These samplers are applied for developing the fast Variational AutoEncoder.
    \item We evaluate extensively the proposed algorithms on four real-world datasets, demonstrating that FastVAE performs at least as well as VAE for recommendation. Moreover, the proposed samplers are highly accurate compared to existing sampling methods, and perform sampling with high efficiency.
\end{itemize}

\section{Related Work}
In this paper, we propose a new series of approximate softmax distributions based on the inverted multi-index. These samplers are used for efficiently training VAE. Therefore, 
we mainly survey related work about efficient softmax, negative sampling and maximum inner product search. 
Please refer to the survey~\cite{zhang2019deep,wang2019sequential} for deep learning-based recommender systems, and the survey~\cite{adomavicius2005toward} for classical recommendation algorithms.

\subsection{Efficient Softmax Training}
Sampled softmax improves training based on self-normalized importance sampling~\cite{bengio2008adaptive} with a mixture proposal of unigram, bigram and trigrams. Hierarchical softmax~\cite{morin2005hierarchical} uses the tree structure and lightRNN~\cite{li2016lightrnn} uses the table to decompose the softmax probability such that the probability can be quickly computed. Noise-Contrastive Estimation~\cite{gutmann2010noise} uses nonlinear logistic regression to distinguish the observed data from some artificially generated noise, and has been successfully used for language modeling~\cite{mnih2013learning}. Sphere softmax~\cite{de2015exploration,vincent2015efficient} replaces the exponential function with a quadratic function, allowing exact yet efficient gradient computation. 

\subsection{Negative Sampling in RS}
Dynamic negative sampling (DNS)~\cite{zhang2013optimizing} draws a set of negative samples from the uniform distribution and then picks the item with the largest prediction score. Similar to DNS, the self-adversarial negative sampling~\cite{sun2019rotate} draws negative samples from the uniform distribution but treats the sampling probability as their weights. Kernel-based sampling~\cite{blanc2018adaptive} picks samples proportionally to a quadratic kernel, making it fast to compute the partition function in the kernel space and to sample entries in a divide and conquer way. Locality Sensitive Hashing (LSH) over randomly perturbed databases enable sublinear time sampling~\cite{mussmann2016learning} and LSH itself can generate correlated and unnormalized samples~\cite{spring2017new}, which allows efficient estimation of the partition function. Self-Contrast Estimator~\cite{goodfellow2014distinguishability} copies the model and uses it as the noise distribution after every step of learning. Generative Adversarial Networks~\cite{wang2017irgan,jin2020sampling} directly learn the noise distribution via the generator networks. 

\subsection{Maximum Inner Product Search}
The MIPS problem is challenging since the inner product violates the basic axioms of a metric, such as a triangle inequality and non-negative. Some methods try to transform MIPS to nearest neighbor search (NNS) approximately~\cite{shrivastava2014asymmetric} or exactly~\cite{bachrach2014speeding,neyshabur2015symmetric}. The key idea of the transformation lies in augmenting database vectors to ensure them an (nearly) identical norm, since MIPS is equivalent to NNS when the database vectors are of the same norm. After the transformation, a bulk of algorithms can be applied for ANN search, such as Euclidean Locality-Sensitive Hashing~\cite{datar2004locality}, Signed Random Projection~\cite{shrivastava2014improved} and PCA-Tree~\cite{bachrach2014speeding}. Several existing work also studies quantization-based MIPS by exploiting additive nature of inner product, such as additive quantization~\cite{blanc2018adaptive}, composite quantization~\cite{zhang2014composite} and even extends PQ from the Euclidean distance to the inner product~\cite{guo2016quantization}. Similarly, the graph-based index has been extended to MIPS~\cite{morozov2018non}, achieving remarkable performance.

\section{Preliminaries}
\subsection{Mult-VAE}\label{sec:vae-cf}
Assuming recommender models operate on $N$ users' implicit behavior (e.g. click or view) over $M$ items, where each user $u$ is represented by the observed data $\bm{y}_u$ of dimension $M$. Each entry $y_{ui}$ indicates an interaction record to the item $i$, where $y_{ui}=0$ indicates no interaction. Mult-VAE~\cite{liang2018variational} is a representative nonlinear recommender method for modeling such implicit data. It particularly encodes $\bm{y}_u$ with a Gaussian-distributed latent factor $\bm{z}_u$ and then decodes it to $\bm{\hat{y}}_u$, a probability distribution over all items. The objective is to maximize the evidence lower bound (ELBO):
\begin{equation}
	\mathcal{L}(u)=\mathbb{E}_{\bm{z}_u \sim q_\phi(\cdot|\bm{y}_u)}[\log p_\theta(\bm{{y}}_u|\bm{z}_u)]-KL\big(q_\phi(\bm{z}_u|\bm{y}_u)||p(\bm{z}_u)\big),
	\label{eq:obj_vae-cf}
\end{equation}
where $q_\phi(\bm{z}_u|\bm{y}_u)$ is the variational posterior with parameters $\phi$ to approximate the true posterior $p(\bm{z}_u|\bm{y}_u)$. $q_\phi(\bm{z}_u|\bm{y}_u)$ is generally assumed to follow the Gaussian-distribution whose mean and variance are estimated by the encoder of Mult-VAE. That is, $\bm{z}_u\sim \mathcal{N}\big(\text{MLP}_{\mu}(\bm{y}_u;\phi), \text{diag}(\text{MLP}_{\sigma^2}(\bm{y}_u;\phi))\big)$, where $\text{MLP}_{\mu}$ and $\text{MLP}_{\sigma^2}$ denote multilayer perceptrons (MLPs). $p(\bm{z}_u)$ is the prior Gaussian distribution $\mathcal{N}(\bm{0}, \bm{I})$. $p_\theta(\bm{{y}_u}|\bm{z}_u)$ is the generative distribution conditioned on $\bm{z}_u$. The observed data $\bm{y}_u$ is assumed to be drawn from the multinomial distribution, which motivates the widely-used multinomial log-likelihood in Eq.~\eqref{eq:obj_vae-cf}:
\begin{equation*}
	\log p_\theta (\bm{{y}}_u|\bm{z}_u) = \sum_{i\in \mathcal{I}} \log p_\theta(\hat{y}_{ui}|\bm{z}_u) 
	= \sum_{i\in \mathcal{I}} \log \frac{\exp(\bm{z}_u^\top \bm{q}_i)}{\sum_{j\in \mathcal{I}} \exp(\bm{z}_u^\top \bm{q}_j)},
\end{equation*}
where $\mathcal{I}$ is the set of all items, $\bm{z}_u$ and $\bm{q}_i$ is the latent representation of user $u$ and item $i$, respectively. 

\subsection{Sampled Softmax}
Optimizing the multinomial log-likelihood of Mult-VAE is time-consuming due to the log-partition function over the logits of all items.
Given one user's inner-product logit $o_i$ for item $i$, the preference probability for the item $i$ is calculated by $P(i)=\frac{\exp(o_i)}{\sum_{j=1}^{|\mathcal{I}|}\exp(o_j)}$. Denoting model parameters by $\theta$, the gradient of the log-likelihood loss is computed as $\nabla_\theta \log P(i) = \nabla_\theta o_i-\mathbb{E}_{j\sim P}\nabla_\theta o_j$. Therefore, it takes linear costs in the number of items to compute the loss and gradient. This hinders the multinomial likelihood from the practical use in the real-world scenario with millions of items.

Sampled softmax is one popular approximation approach for log-softmax based on the self-normalized importance sampling. Since the second term of $\nabla_\theta \log P(i)$ involves an expectation, it can be approximated by sampling a small set of candidate samples $\Phi$ from a proposal $Q$. This can be equivalently achieved by adjusting $o_j'=o_j-\log Q(j), \forall j\in \{i\}\cup \Phi$ and computing the softmax over $\{i\}\cup \Phi$ (i.e. sampled softmax). Obviously, the computational cost for loss and gradient is significantly reduced. However, to guarantee the gradient of the sampled softmax unbiased, Bengio and Senécal~\cite{bengio2008adaptive} showed that the proposal $Q$ should be equivalent to the softmax distribution $P$. Since it is computationally expensive to sample from the softmax distribution, many existing methods simply use the uniform or popularity-based proposal. One recent important method~\cite{blanc2018adaptive} proposed to adopt quadratic kernel-based distributions as the proposal. However, it is not always a good approximation of the softmax distribution and suffers from a large memory footprint. Thus, it is necessary to seek a more accurate and flexible sampler.

\section{Exact Sampling with Inverted Multi-Index}\label{sec:samplers}
As demonstrated, to guarantee the gradient of the sampled softmax unbiased, it is necessary to draw candidate items from the softmax probability with the inner-product logits:
\begin{equation}
	Q(y_i|\bm{z}_u) = \frac{\exp(\bm{z}_u^\top \bm{q}_i)}{\sum_{j\in \mathcal{I}} \exp(\bm{z}_u^\top \bm{q}_j)}.
	\label{eq:proposal}
\end{equation}
To achieve this goal, inspired by the popular inverted multi-index~\cite{babenko2014inverted,JDH17,guo2020accelerating} for the approximate maximum inner product search (MIPS) and nearest neighbor search (ANNs), we provide a new way for sampling items from multiple multinomial distributions in order. Technical details will be elaborated below.

The inverted multi-index~\cite{babenko2014inverted} generalizes the inverted index with multiple codebook quantization, such as product quantization~\cite{jegou2010product} and additive quantization~\cite{babenko2014additive}. Below we demonstrate with product quantization, whose basic idea is to independently quantize multiple subvectors of indexed vectors. 
Formally, suppose $\bm{q}\in\mathbb{R}^D$ is an item vector, we first evenly split it into $m$ distinct subvectors (i.e., $\bm{q}=\bm{q}^1\oplus\bm{q}^2\oplus\cdots\oplus\bm{q}^m$ where $\oplus$ is the concatenation). Then, each subvector $\bm{q}^l$ is mapped to an element of a fixed-size vector set by a quantizer $f_l: f_l(\bm{q}^l)\in \mathcal{C}^{l}=\{\bm{c}^l_k| k\in\{1,...,K\}\}$, where $\mathcal{C}^{l}$ is the vector set (i.e. codebook) of size $K$ in the $l$-th subspace and the element $\bm{c}^l_k$ is called a codeword. Therefore, $\bm{q}$ is mapped as follows:
\begin{displaymath}
	\bm{q}\rightarrow f_1(\bm{q}^1) \oplus f_2(\bm{q}^2) \oplus \cdots \oplus f_m(\bm{q}^m)=\bm{c}^1_{k_1} \oplus \bm{c}^2_{k_2} \oplus \cdots \oplus \bm{c}^m_{k_m}.
\end{displaymath}
where $k_l (1 \le l \le m)$ is the index of the mapped codeword from $\bm{q}^l$. The codewords of each codebook can be simply determined by the K-means clustering~\cite{jegou2010product}, where the $l$-th subvectors of all items' vectors are grouped into $K$ clusters. In the following, we demonstrate sampling with 2 codebooks for simplicity (i.e. $m=2$), which is the default option of inverted multi-index. 

With the quantization, each item vector is only approximated by the concatenation of codewords. To eliminate the difference between item vector and its approximation, we add a residual vector $\bm{\tilde{q}}=\bm{q}-\bm{c}^1_{k_1} \oplus \bm{c}^2_{k_2}$ to the approximation. It is well-known that the inverted multi-index only assigns each item to a unique codeword in each subspace, making it possible to develop sublinear-time sampling methods from the softmax distribution. The following theorem lays the foundation.
\begin{theorem}\label{thm:MIDX}
	Assume $\bm{z}_u=\bm{z}_u^1 \oplus \bm{z}_u^2$ is a vector of a user $u$, $\bm{q_i}=\bm{c}^1_{k_1} \oplus \bm{c}^2_{k_2}+\bm{\tilde{q}}_i$ is a vector of an item $i$, $\Omega_{k_1,k_2}$ is the set of items which are assigned to $\bm{c}^1_{k_1}$ in the first subspace and $\bm{c}^2_{k_2}$ in the second subspace. The softmax probability $Q(y_i|\bm{z}_u)$can be decomposed as follows:
	\begin{equation}
		\small
		\begin{gathered}
			Q(y_i|\bm{z}_u)=P_u^1(k_1)\cdot P_u^2(k_2|k_1)\cdot P_u^3(y_i|k_1,k_2), \\
			P^1_u(k_1)=\frac{\psi_{k_1} \exp({\bm{z}_u^1}^\top \bm{c}^1_{k_1})}{\sum_{k=1}^K \psi_{k}\exp({\bm{z}_u^1}^\top \bm{c}^1_{k})}, \\
			P_u^2(k_2|k_1)=\frac{\omega_{k_1, k_2}\exp({\bm{z}_u^2}^\top \bm{c}^2_{k_2})}{\underbrace{\sum_{k=1}^K \omega_{k_1,k} \exp({\bm{z}_u^2}^\top  \bm{c}^2_{k})}_{\psi_{k_1}}},\quad P_u^3(y_i|k_1,k_2)=\frac{\exp(\bm{z}_u^\top \bm{\tilde{q}}_i)} {\underbrace{\sum_{j \in \Omega_{k_1, k_2}} \exp({\bm{z}_u^\top} \bm{\tilde{q}}_j)}_{\omega_{k_1,k_2}} }.
		\end{gathered}
	\label{eq:decompose}
	\end{equation}
\end{theorem}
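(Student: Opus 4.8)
The plan is to verify the decomposition directly, by forming the product of the three claimed factors and checking that it collapses to $Q(y_i|\bm{z}_u)$. The starting observation is that the logit factorizes additively: since $\bm{z}_u=\bm{z}_u^1\oplus\bm{z}_u^2$ and $\bm{q}_i=\bm{c}^1_{k_1}\oplus\bm{c}^2_{k_2}+\bm{\tilde{q}}_i$, the concatenation structure gives $\bm{z}_u^\top(\bm{c}^1_{k_1}\oplus\bm{c}^2_{k_2})={\bm{z}_u^1}^\top\bm{c}^1_{k_1}+{\bm{z}_u^2}^\top\bm{c}^2_{k_2}$, and hence $\exp(\bm{z}_u^\top\bm{q}_i)=\exp({\bm{z}_u^1}^\top\bm{c}^1_{k_1})\exp({\bm{z}_u^2}^\top\bm{c}^2_{k_2})\exp(\bm{z}_u^\top\bm{\tilde{q}}_i)$. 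This is exactly the product of the three numerators on the right-hand side, so the numerators already agree and only the normalizers require attention.

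First I would multiply $P_u^1(k_1)\,P_u^2(k_2|k_1)\,P_u^3(y_i|k_1,k_2)$ and exploit the telescoping of the normalizers. The factor $\psi_{k_1}$ sits in the numerator of $P_u^1$ and in the denominator of $P_u^2$, while $\omega_{k_1,k_2}$ sits in the numerator of $P_u^2$ and in the denominator of $P_u^3$; both cancel. Combined with the numerator factorization above, the product reduces to $\exp(\bm{z}_u^\top\bm{q}_i)\big/\big(\sum_{k=1}^K\psi_k\exp({\bm{z}_u^1}^\top\bm{c}^1_k)\big)$, so everything then hinges on identifying the surviving denominator with the true partition function $\sum_{j\in\mathcal{I}}\exp(\bm{z}_u^\top\bm{q}_j)$.

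To finish, I would substitute the definitions of $\psi_k$ and $\omega_{k_1,k_2}$ to rewrite that denominator as the triple sum $\sum_{k_1=1}^K\sum_{k_2=1}^K\sum_{j\in\Omega_{k_1,k_2}}\exp({\bm{z}_u^1}^\top\bm{c}^1_{k_1})\exp({\bm{z}_u^2}^\top\bm{c}^2_{k_2})\exp(\bm{z}_u^\top\bm{\tilde{q}}_j)$. For each $j\in\Omega_{k_1,k_2}$, its assigned codewords are precisely $\bm{c}^1_{k_1}$ and $\bm{c}^2_{k_2}$, so the same inner-product factorization collapses each summand back to $\exp(\bm{z}_u^\top\bm{q}_j)$.

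The main (and essentially only) conceptual step is the observation that the family $\{\Omega_{k_1,k_2}\}$ partitions $\mathcal{I}$, which is the defining property of the inverted multi-index: every item is assigned to a unique codeword in each of the two subspaces, so the clusters are pairwise disjoint and cover all items. Granting this partition, the triple sum over $(k_1,k_2)$ collapses to $\sum_{j\in\mathcal{I}}\exp(\bm{z}_u^\top\bm{q}_j)$, matching the original partition function and closing the identity; the remaining work is merely bookkeeping of the two cancellations.
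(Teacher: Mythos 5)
Your proposal is correct and is essentially the paper's own argument run in reverse: the paper starts from $Q(y_i|\bm{z}_u)$, rewrites its partition function as the nested triple sum over $(k_1,k_2,j\in\Omega_{k_1,k_2})$ using exactly the same partition property and inner-product factorization you invoke, and then peels off $P_u^1$, $P_u^2$, $P_u^3$ via the same telescoping of $\psi_{k_1}$ and $\omega_{k_1,k_2}$. Since the identity is symmetric, verifying it by multiplying the factors and collapsing (your direction) is equivalent to expanding and factoring (the paper's direction), so there is nothing to add.
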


The proof is attached in the Appendix.
Theorem~\ref{thm:MIDX} can be straightforwardly extended to the case where $m>2$. Surprisingly, this theorem provides a new perspective to exactly sample a candidate item from the softmax probability in Eq.~\eqref{eq:proposal}, which is called \textbf{MIDX} sampler. First of all, we should construct three multinomial distributions in Eq.~\eqref{eq:decompose}. Second, we sample an index $k_1$ from $P_u^1(\cdot)$, indicating to select the codeword from the first codebook $\mathcal{C}^1$. Third, we sample another index $k_2$ from the conditional probability $P_u^2(\cdot|k_1)$, indicating to select the codeword from the second codebook $\mathcal{C}^2$ given the first index. Finally, a candidate item can be sampled from the residual softmax in $P_u^3(\cdot|k_1, k_2)$. An important observation is that $\omega_{k_1, k_2}$ is absolutely not empty, such that each time an item can be sampled out in the last step. Figure~\ref{fig:sampler_procedure} illustrates the procedure and Algorithm~\ref{alg:sampling} details the workflow.

\begin{figure}[t]
	\centering
	\includegraphics[width=0.90\columnwidth]{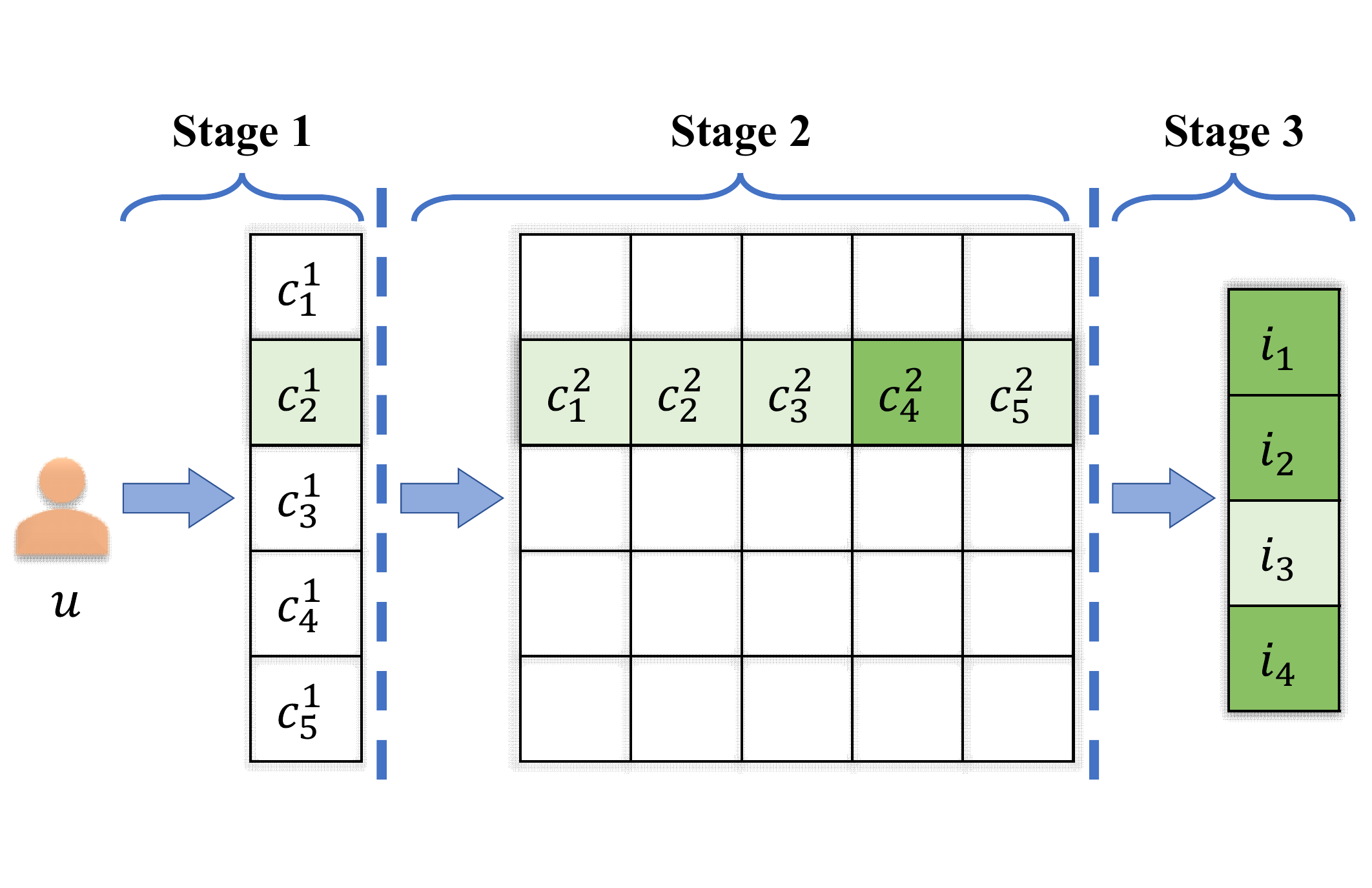}
	\vspace{-0.8cm}
	\caption{An illustration of the sampling. Firstly, draw a codeword index $(k_1=2)$ in the first codebook, and then draw another codword index $(k_2=4)$. Finally, item $i_3$ is sampled from  $\Omega_{2, 4}$, which is the set of items assigned to $k_1$ and $k_2$.}
	\label{fig:sampler_procedure}
	\vspace{-0.4cm}
\end{figure}

\noindent \textbf{Time complexity analysis.}
From Algorithm~\ref{alg:sampling}, we see that the overall procedure can be split into two parts. Lines 1-3 describe the initialization part to obtain codebooks and lines 4-13 describe the sampling part with the computation of the probability. Being independent to users, the initialization part is only executed once in $\mathcal{O}(KMDt)$, where $M$ is the number of items and $t$ is the number of iterations in K-means. Thanks to the Vose-Alias method sampling techniques~\cite{walker1977efficient}, the sampling part only takes $\mathcal{O}(1)$ time to sample an item. Unfortunately, it is necessary to compute the inner-product logits over all items, which takes $\mathcal{O}(MD)$ time.

\begin{algorithm}
	\caption{MIDX Sampling}
	\label{alg:sampling}
	\LinesNumbered
	\KwIn{Items' vectors $\{\bm{q}_i | i \in \mathcal{I}\}$, user vector $\bm{z}_u$, sampling size $T$, codebook size $K$}
	\KwOut{Candidate samples with sampling probability $(\Phi)$}
	$\mathcal{C}^{1},\mathcal{C}^{2}\leftarrow$ ProductQuantization($\{\bm{{q}}_i| i\in \mathcal{I}\}$, $K$) \;
	Compute residual vectors for all items $\{\bm{\tilde{q}}_i| i\in \mathcal{I}\}$\;
	Compute $\Omega_{k_1,k_2}, \forall 1\le k_1, k_2 \le K$ \;
	\For{$k_1=1$ \KwTo $K$}{
		\For{$k_2=1$ \KwTo $K$}{
			Compute $\omega_{k_1, k_2}$ and construct $P_u^3(\cdot|k_1,k_2)$ in Eq.~\eqref{eq:decompose}\;
		}
		Compute $\psi_{k_1}$ and construct $P_u^2(\cdot|k_1)$ in Eq.~\eqref{eq:decompose}\;
	}
	Construct $P_u^1(\cdot)$ in Eq.~\eqref{eq:decompose}\;
	Initialize $\Phi=\emptyset$\;
	\For{$i=1$ \KwTo $T$}{
		Respectively sample $k_1, k_2, i$ from $P_u^1(\cdot), P_u^2(\cdot|k_1)$ and $P_u^3(\cdot|k_1,k_2)$ in order\;
		$Q(y_i|\bm{z}_u)\leftarrow P_u^1(k_1)P_u^2(k_2|k_1)P_u^3(y_i|k_1,k_2)$\;
		$\Phi \leftarrow \Phi \cup (i, Q(y_i|\bm{z}_u))$\;
	}
	Return $\Phi$\;
\end{algorithm}

\section{Approximate Sampling with Inverted Multi-Index}
The reason why MIDX spends much time on sampling part is that it involves computing inner-product logits over all items when preparing $P_1(\cdot)$, $P_2(\cdot|k_1)$ and $P_3(\cdot|k_1,k_2)$ in Eq.~\eqref{eq:decompose}. To address this issue, we design two variants of MIDX sampling by reducing the time for computing $P_1(\cdot)$, $P_2(\cdot|k_1)$ and $P_3(\cdot|k_1,k_2)$. Although these samplers only approximate the softmax distribution, we theoretically show that the divergence between them is small.

\subsection{MIDX with Uniform}
If replacing the multinomial distribution $P_3(\cdot|k_1, k_2)$ with a non-personalized and static distribution, it will be efficient to prepare $P_1(\cdot)$ and $P_2(\cdot|k_1)$, since they only involve computing the inner product between user vector and codewords instead of the whole item vectors. A straightforward choice is the uniform distribution. The resultant variant is called \textbf{MIDX\_Uni}, whose distribution is derived based on the following theorem.
\begin{theorem}\label{thm:MIDX_uni}
	Suppose $P_1(\cdot)$ and $P_2(\cdot|k_1)$ remain the same as that in Theorem~\ref{thm:MIDX}, $P_3(\cdot|k_1,k_2)$ is replaced with a uniform distribution, i.e. $P_3(y_i|k_1,k_2)=\frac{1}{|\Omega_{k_1, k_2}|}$, where $|\Omega_{k_1, k_2}|$ denotes the number of items in the set. Then, the proposal distribution is equivalent to:
	\begin{equation}
		\begin{aligned}
			Q_{\text{uni}}(y_i|\bm{z}_u)
			& =\frac{\exp({\bm{z}_u^1}^\top \bm{c}^1_{k_1}) \exp({\bm{z}_u^2}^\top \bm{c}^2_{k_2})}{\sum_{k,k'} |\Omega_{k, k'}|\exp({\bm{z}_u^1}^\top \bm{c}^1_{k}) \exp({\bm{z}_u^2}^\top  \bm{c}^2_{k'})} \\
			& =\frac{\exp(\bm{z}_u^\top(\bm{q}_i-\bm{\tilde{q}}_i))} {\sum_{j\in \mathcal{I}}\exp(\bm{z}_u^\top(\bm{q}_j-\bm{\tilde{q}}_j))}.
		\end{aligned}
		\label{eq:midx_uni}
	\end{equation}
\end{theorem}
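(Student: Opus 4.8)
The plan is to reuse the chain decomposition from Theorem~\ref{thm:MIDX} and simply substitute the uniform choice for the last factor, then collapse the telescoping normalizers. Since $P_1(\cdot)$ and $P_2(\cdot|k_1)$ are to keep their functional form, the first thing I would pin down is the value of the normalizer $\omega_{k_1,k_2}$ under the uniform substitution. In Theorem~\ref{thm:MIDX} this normalizer is the residual partition function $\omega_{k_1,k_2}=\sum_{j\in\Omega_{k_1,k_2}}\exp(\bm{z}_u^\top\bm{\tilde{q}}_j)$. Replacing $P_3$ by the uniform distribution $1/|\Omega_{k_1,k_2}|$ is equivalent to flattening every residual logit to a constant, i.e.\ treating $\exp(\bm{z}_u^\top\bm{\tilde{q}}_j)\equiv 1$; hence the residual partition function degenerates to the cluster size, $\omega_{k_1,k_2}=|\Omega_{k_1,k_2}|$, and correspondingly $\psi_{k_1}=\sum_{k}|\Omega_{k_1,k}|\exp({\bm{z}_u^2}^\top\bm{c}^2_k)$.

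With this identification, I would write out the product $Q_{\text{uni}}(y_i|\bm{z}_u)=P_u^1(k_1)\,P_u^2(k_2|k_1)\,\tfrac{1}{|\Omega_{k_1,k_2}|}$ and perform two cancellations: the factor $\psi_{k_1}$ appearing in the numerator of $P_u^1$ and the denominator of $P_u^2$ cancels, and the factor $|\Omega_{k_1,k_2}|=\omega_{k_1,k_2}$ from $P_u^2$ cancels against the $1/|\Omega_{k_1,k_2}|$ of the uniform factor. Expanding the remaining outer normalizer $\sum_k\psi_k\exp({\bm{z}_u^1}^\top\bm{c}^1_k)$ via the definition of $\psi_k$ turns it into the double sum $\sum_{k,k'}|\Omega_{k,k'}|\exp({\bm{z}_u^1}^\top\bm{c}^1_k)\exp({\bm{z}_u^2}^\top\bm{c}^2_{k'})$, which yields the first line of Eq.~\eqref{eq:midx_uni}.

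For the second equality I would use the quantization identity $\bm{q}_i-\bm{\tilde{q}}_i=\bm{c}^1_{k_1}\oplus\bm{c}^2_{k_2}$ together with the split $\bm{z}_u=\bm{z}_u^1\oplus\bm{z}_u^2$, so that $\bm{z}_u^\top(\bm{q}_i-\bm{\tilde{q}}_i)={\bm{z}_u^1}^\top\bm{c}^1_{k_1}+{\bm{z}_u^2}^\top\bm{c}^2_{k_2}$; exponentiating recovers exactly the numerator of the first form. For the denominator, I would partition $\mathcal{I}$ into the disjoint cells $\Omega_{k,k'}$ (each item is assigned to a unique codeword pair), observe that the summand $\exp(\bm{z}_u^\top(\bm{q}_j-\bm{\tilde{q}}_j))$ is constant across all $j$ in a given cell, and factor out that constant, leaving $|\Omega_{k,k'}|$ as the count of that cell, which reproduces the double sum in the denominator.

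The only genuinely subtle step is the reinterpretation of $\omega_{k_1,k_2}$ as the cluster size rather than the residual partition function; everything after that is telescoping cancellation and a disjoint-cell partition argument. I would therefore state this substitution explicitly up front so that the subsequent algebra is transparent.
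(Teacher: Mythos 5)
Your proposal is correct and follows essentially the same route as the paper's proof: substitute the uniform factor, cancel $\psi_{k_1}$ and $|\Omega_{k_1,k_2}|$ telescopically, expand the outer normalizer into the double sum, and then regroup $\sum_{j\in\mathcal{I}}$ over the disjoint cells $\Omega_{k,k'}$ to obtain the second equality. The one point you flag as subtle --- that the normalizers $\omega_{k_1,k_2}$ and $\psi_{k_1}$ inside $P_1$ and $P_2$ must be recomputed as $|\Omega_{k_1,k_2}|$ and $\sum_k|\Omega_{k_1,k}|\exp({\bm{z}_u^2}^\top\bm{c}^2_k)$ under the uniform substitution --- is exactly what the paper does implicitly by writing primed symbols $\omega'_{k_1,k_2}$ and $\Psi'_{k_1}$ without comment, so making it explicit is a clarification rather than a departure.
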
 

The proof is attached in the Appendix. Theorem~\ref{thm:MIDX_uni} shows that each time the codeword with the large inner product and with more items are more likely to be sampled.

\noindent \textbf{Time complexity analysis.} 
When computing the sampling probability, the computation only involves the inner product between the user vector and all codewords, which takes $\mathcal{O}(KD)$ to compute. In addition, it takes $\mathcal{O}(K^2)$ since it should calculate the normalization constant in $P_2(\cdot|k_1)$ for each $k_1$. Overall, the time complexity of the preprocessing part is $\mathcal{O}(KD+K^2)$. Since the codebook size $K$ is much smaller than the number of items $M$, MIDX\_Uni sampling is much more efficient than the MIDX sampling.

\subsection{MIDX with Popularity}
Besides the uniform distribution, another widely-used static distribution is derived from popularity. If a user does not interact with a popular item, she may be truly uninterested in it since the item is highly likely to be exposed to the user. Therefore, by introducing the popularity, we design the second variant, \textbf{MIDX\_Pop}, whose distribution is derived by the following theorem.
\begin{theorem}\label{thm:MIDX_pop}
	Suppose $P_1(\cdot)$ and $P_2(\cdot|k_1)$ remain the same as that in Theorem~\ref{thm:MIDX}, $P_3(\cdot|k_1,k_2)$ is replaced with a distribution derived from the popularity, i.e. $P_3(y_i|k_1,k_2)=\frac{pop(i)}{\sum_{j \in \Omega_{k_1, k_2}} pop(j)}$, where $pop(i)$ can be any metric of the popularity. Then, the proposal distribution is equivalent to:
	\begin{equation}
		Q_{\text{pop}}(y_i|\bm{z}_u)=\frac{\exp(\bm{z}_u^\top(\bm{q}_i-\bm{\tilde{q}}_i) + \log pop(i))} {\sum_{j\in \mathcal{I}}\exp(\bm{z}_u^\top(\bm{q}_j-\bm{\tilde{q}}_j) + \log pop(j))}.
		\label{eq:midx_pop}
	\end{equation}
\end{theorem}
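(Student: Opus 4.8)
The plan is to proceed exactly as in the proof of Theorem~\ref{thm:MIDX_uni}, treating the popularity weights $pop(j)$ as the analogue of the uniform weight $1$. The only structural change relative to Theorem~\ref{thm:MIDX} is that replacing $P_3(\cdot|k_1,k_2)$ forces a redefinition of the normalizer $\omega_{k_1,k_2}$ so that the product of the three factors again telescopes into a single probability; here ``$P_1(\cdot)$ and $P_2(\cdot|k_1)$ remain the same'' should be read as \emph{same functional form} with the new normalizer substituted. Concretely, since $P_3(y_i|k_1,k_2)=pop(i)/\sum_{j\in\Omega_{k_1,k_2}}pop(j)$, the natural redefinition is $\omega_{k_1,k_2}=\sum_{j\in\Omega_{k_1,k_2}}pop(j)$ and $\psi_{k_1}=\sum_{k}\omega_{k_1,k}\exp({\bm{z}_u^2}^\top\bm{c}^2_{k})$, while the shapes of $P^1_u(\cdot)$ and $P^2_u(\cdot|k_1)$ in Eq.~\eqref{eq:decompose} are kept verbatim.

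First I would form the product $Q_{\text{pop}}(y_i|\bm{z}_u)=P^1_u(k_1)\,P^2_u(k_2|k_1)\,P^3_u(y_i|k_1,k_2)$ and cancel $\psi_{k_1}$ between $P_1$ and $P_2$, then $\omega_{k_1,k_2}$ between $P_2$ and $P_3$. This leaves the numerator $pop(i)\exp({\bm{z}_u^1}^\top\bm{c}^1_{k_1})\exp({\bm{z}_u^2}^\top\bm{c}^2_{k_2})$ over the denominator $\sum_{k}\psi_{k}\exp({\bm{z}_u^1}^\top\bm{c}^1_{k})$. Next I would unfold the denominator using the definition of $\psi_k$ into the double sum $\sum_{k,k'}\omega_{k,k'}\exp({\bm{z}_u^1}^\top\bm{c}^1_{k})\exp({\bm{z}_u^2}^\top\bm{c}^2_{k'})$ and substitute $\omega_{k,k'}=\sum_{j\in\Omega_{k,k'}}pop(j)$.

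The key step --- the one I expect to carry the argument --- is collapsing this double sum over codeword pairs into a single sum over items. Because the inverted multi-index assigns every item $j$ to a unique pair $(k_1^{(j)},k_2^{(j)})$, and because $\bm{q}_j-\bm{\tilde{q}}_j=\bm{c}^1_{k_1^{(j)}}\oplus\bm{c}^2_{k_2^{(j)}}$ together with $\bm{z}_u=\bm{z}_u^1\oplus\bm{z}_u^2$ gives $\exp({\bm{z}_u^1}^\top\bm{c}^1_{k_1^{(j)}})\exp({\bm{z}_u^2}^\top\bm{c}^2_{k_2^{(j)}})=\exp(\bm{z}_u^\top(\bm{q}_j-\bm{\tilde{q}}_j))$, the nested sum $\sum_{k,k'}\sum_{j\in\Omega_{k,k'}}$ reduces to $\sum_{j\in\mathcal{I}}$ with summand $pop(j)\exp(\bm{z}_u^\top(\bm{q}_j-\bm{\tilde{q}}_j))$. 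Applying the same identity to the numerator (where item $i$ lies in $\Omega_{k_1,k_2}$) yields
\begin{equation*}
Q_{\text{pop}}(y_i|\bm{z}_u)=\frac{pop(i)\exp(\bm{z}_u^\top(\bm{q}_i-\bm{\tilde{q}}_i))}{\sum_{j\in\mathcal{I}}pop(j)\exp(\bm{z}_u^\top(\bm{q}_j-\bm{\tilde{q}}_j))}.
\end{equation*}
Finally, writing $pop(j)=\exp(\log pop(j))$ and folding it into the exponent gives Eq.~\eqref{eq:midx_pop}, completing the argument. The main obstacle is essentially bookkeeping: making the redefined $\omega$ and $\psi$ mutually consistent so the telescoping cancellation is exact; once the unique-assignment identity is invoked, the remainder is routine.
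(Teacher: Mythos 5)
Your proof is correct and follows essentially the same route as the paper's: form the product $P_1\cdot P_2\cdot P_3$, cancel the redefined normalizers $\psi_{k_1}$ and $\omega_{k_1,k_2}$, expand the remaining denominator into a double sum over codeword pairs, and collapse it to a sum over $\mathcal{I}$ via the unique cluster assignment and the identity $\exp({\bm{z}_u^1}^\top\bm{c}^1_{k_1})\exp({\bm{z}_u^2}^\top\bm{c}^2_{k_2})=\exp(\bm{z}_u^\top(\bm{q}_i-\bm{\tilde{q}}_i))$. Your explicit redefinition $\omega_{k_1,k_2}=\sum_{j\in\Omega_{k_1,k_2}}pop(j)$ is exactly what the paper encodes implicitly with its primed quantities $\Psi'_{k}$ and $\omega'_{k_1,k}$, so the two arguments coincide.
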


The proof is attached in the Appendix. Generally, let $c_i$ be occurring frequency of item $i$, $pop(i)$ can be set to $c_i$, $\log(1 +c_i)$ or $c_i^{3/4}$~\cite{mikolov2013distributed}. We empirically find that $c_i$ achieves comparatively better performance. Theorem~\ref{thm:MIDX_pop} shows that the sampling probability of an item is additionally affected by the popularity, such that the more popular items are more likely to be sampled. Regarding the time complexity, it takes $\mathcal{O}(KD+K^2)$ time in the preprocessing part,  which is the same as MIDX\_Uni.

Table~\ref{tab:time_complexity} summarizes the time and space complexity for item sampling from different proposals, which demonstrates the superiority of MIDX\_Uni and MIDX\_Pop in space and time cost. Thanks to the independence of the users, the MIDX\_Uni and MIDX\_Pop can be implemented on the GPUs, which accelerates the sampling procedure. 
Note that the initialization time refers to constructing indexes, such as alias tables, inverted multi-index or tree.

\begin{table}[t]
    \setlength{\tabcolsep}{5pt}
	\caption{Space and Time complexity of sampling $T$ items from different proposals. Denote by $M$ the number of items, $D$ the representation dimension, and $K$ the codebook size. $B$ the sample size of DNS's uniform sampling. ($K, D\ll M$)}
	\label{tab:time_complexity}
	\centering
	\vspace{-0.3cm}
	\begin{tabular}{c|c|c}
		\toprule
		Proposals $Q$ & Space &  Sample Time\\
		\midrule
		Uniform & $1$  & $T$ \\
		Popularity & $M$ & $T$  \\
		DNS~\cite{zhang2013optimizing} & $MD$ & $BDT$  \\
		Kernel~\cite{blanc2018adaptive} & $MD^2$ &  $D^2 T \log M$  \\
		MIDX in Eq.~\eqref{eq:decompose} & $MD$ &  $MD + T$ \\
		MIDX\_Uni in Eq.~\eqref{eq:midx_uni} & $KD +K^2+ M$ &  $KD + K^2 + T$ \\
		MIDX\_Pop in Eq.~\eqref{eq:midx_pop} & $KD +K^2+ M$ & $KD + K^2 + T$\\
		\bottomrule
	\end{tabular}
\end{table}

\subsection{Theoretical Analysis}
In this section, we further theoretically explain the bias of the proposed distribution from the softmax distribution.
\begin{theorem}\label{theorem:kl_midx_uni}
	Assuming that the residual embedding $ \Vert \bm{\tilde{q}}_i\Vert \leq C $, the KL divergence from the softmax distribution $ Q(\bm{y}_{\cdot}|\bm{z}_u)$ to the proposed distribution $Q_{\text{uni}}(\bm{y}_{\cdot}|\bm{z}_u)$ can be bounded from above:
	\begin{displaymath}
		0 < \mathcal{D}_{KL}\left[ Q_{\text{uni}}(\bm{y}_{\cdot}|\bm{z}_u) ||  Q(\bm{y}_{\cdot}|\bm{z}_u) \right] \le 2  C \Vert \bm{z}_u \Vert.
	\end{displaymath}
\end{theorem}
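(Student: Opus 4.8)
The plan is to exploit the fact that $Q$ and $Q_{\text{uni}}$ are both softmax distributions whose logits differ by a single, easily-controlled term. Writing $o_i = \bm{z}_u^\top \bm{q}_i$ for the true logits and $\hat{o}_i = \bm{z}_u^\top(\bm{q}_i - \bm{\tilde{q}}_i)$ for the approximate ones (the latter read off directly from the second line of Eq.~\eqref{eq:midx_uni}), their difference is exactly $o_i - \hat{o}_i = \bm{z}_u^\top \bm{\tilde{q}}_i$, which I will abbreviate as $\delta_i$. First I would form the log-likelihood ratio $\log\frac{Q_{\text{uni}}(y_i|\bm{z}_u)}{Q(y_i|\bm{z}_u)}$; because the two normalizers are constants independent of $i$, this ratio collapses to $-\delta_i + \log(Z/\hat{Z})$, where $Z$ and $\hat{Z}$ denote the two partition functions.

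Next I would take the expectation under $Q_{\text{uni}}$ to assemble the divergence. The term involving the normalizers is where some care is needed: I would rewrite $Z/\hat{Z} = \sum_j \exp(\hat{o}_j + \delta_j)/\hat{Z} = \mathbb{E}_{Q_{\text{uni}}}[\exp(\delta_j)]$, recognizing the partition-function ratio as an expectation over the very same proposal. This turns the whole quantity into the compact cumulant-generating-function form $\mathcal{D}_{KL}[Q_{\text{uni}}\,\|\,Q] = \log \mathbb{E}_{Q_{\text{uni}}}[\exp(\delta_j)] - \mathbb{E}_{Q_{\text{uni}}}[\delta_j]$. I expect verifying this rewriting of the normalizer ratio to be the main obstacle, since it is the one step that is not purely mechanical; everything afterwards is a short estimate.

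With that identity in hand, both bounds follow quickly. For the strict lower bound I would invoke Jensen's inequality: since $\log \mathbb{E}[\exp(\delta)] \ge \mathbb{E}[\delta]$ with equality only when $\delta$ is (almost surely) constant, the divergence is non-negative and strictly positive whenever the residuals are not aligned so as to make $\delta_i$ constant across items. For the upper bound I would control $\delta_i$ pointwise via Cauchy--Schwarz together with the hypothesis $\Vert \bm{\tilde{q}}_i\Vert \le C$, giving $|\delta_i| \le \Vert \bm{z}_u\Vert\,\Vert \bm{\tilde{q}}_i\Vert \le C\Vert \bm{z}_u\Vert$. Then I would bound the two terms of the cumulant form separately, using $\log \mathbb{E}_{Q_{\text{uni}}}[\exp(\delta_j)] \le \max_j \delta_j \le C\Vert \bm{z}_u\Vert$ and $-\mathbb{E}_{Q_{\text{uni}}}[\delta_j] \le -\min_j \delta_j \le C\Vert \bm{z}_u\Vert$, which add to the claimed bound $2C\Vert \bm{z}_u\Vert$.
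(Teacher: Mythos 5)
Your proposal is correct and follows essentially the same route as the paper: both reduce the log-ratio $\log\bigl(Q_{\text{uni}}/Q\bigr)$ to the residual terms $\delta_i = \bm{z}_u^\top\bm{\tilde{q}}_i$ (the paper's expansion of the partition-function ratio is exactly your identity $Z/\hat{Z} = \mathbb{E}_{Q_{\text{uni}}}[\exp(\delta_j)]$, just applied pointwise to bound each ratio by $\exp(2C\Vert\bm{z}_u\Vert)$ before averaging), and both finish with Cauchy--Schwarz giving $|\delta_i| \le C\Vert\bm{z}_u\Vert$. Your cumulant-generating-function packaging is a clean reorganization rather than a new idea, though your Jensen argument with its equality condition actually treats the strict positivity $0 < \mathcal{D}_{KL}$ more rigorously than the paper, which only appeals to non-negativity of the KL divergence.
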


The proof is attached in the Appendix. The divergence of the proposal from Eq.~\eqref{eq:proposal} depends on $\exp(\bm{z}_u^\top\bm{\tilde{q}}_i)$. Therefore, when $\Vert \bm{\tilde{q}}_i \Vert, \forall 1\le i \le M $ (i.e., distortion of product quantization) is small, the divergence between them is small. With the increasing granularity of space partition (the number of clusters in K-means), the residual vectors are of small magnitude such that the upper bound becomes smaller. This indicates that the approximate distribution is less deviated from the softmax distribution.

\begin{theorem}
	Assuming that the residual embedding $ ||\tilde{\bm{q}}_i|| \leq C $, the KL divergence from the softmax distribution $Q(\bm{y}_{\cdot}|\bm{z}_u)$ to the proposed distribution $Q_{\text{pop}}(\bm{y}_{\cdot}|\bm{z}_u)$ can be bounded from above:
	\begin{displaymath}
		0 < \mathcal{D}_{KL}\left[ Q_{\text{pop}}(\bm{y}_{\cdot}|\bm{z}_u) ||  Q(\bm{y}_{\cdot}|\bm{z}_u) \right] \le 2  C ||\bm{z}_u|| + \log \frac{\max pop(\cdot)}{\min pop(\cdot)}.
	\end{displaymath}
\end{theorem}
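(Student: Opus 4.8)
The plan is to mirror the argument used for Theorem~\ref{theorem:kl_midx_uni}, since $Q_{\text{pop}}$ differs from $Q_{\text{uni}}$ only by the multiplicative popularity reweighting. First I would write out the definition $\mathcal{D}_{KL}[Q_{\text{pop}}||Q]=\sum_{i\in\mathcal{I}}Q_{\text{pop}}(y_i|\bm{z}_u)\log\frac{Q_{\text{pop}}(y_i|\bm{z}_u)}{Q(y_i|\bm{z}_u)}$ and expand the pointwise log-ratio. Denoting the softmax normalizer by $Z=\sum_{j\in\mathcal{I}}\exp(\bm{z}_u^\top\bm{q}_j)$ and the normalizer of $Q_{\text{pop}}$ by $Z_{\text{pop}}=\sum_{j\in\mathcal{I}}pop(j)\exp(\bm{z}_u^\top(\bm{q}_j-\tilde{\bm{q}}_j))$, the logit terms partly cancel and leave $\log\frac{Q_{\text{pop}}(y_i|\bm{z}_u)}{Q(y_i|\bm{z}_u)}=\log pop(i)-\bm{z}_u^\top\tilde{\bm{q}}_i+\log\frac{Z}{Z_{\text{pop}}}$. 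Taking the expectation under $Q_{\text{pop}}$ then splits the divergence into an expected popularity term, an expected residual term, and the constant normalizer ratio $\log\frac{Z}{Z_{\text{pop}}}$.

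Next I would bound the two expectation terms by their pointwise suprema. By Cauchy--Schwarz and the hypothesis $\|\tilde{\bm{q}}_i\|\le C$, we have $|\bm{z}_u^\top\tilde{\bm{q}}_i|\le C\|\bm{z}_u\|$, so $-\mathbb{E}_{Q_{\text{pop}}}[\bm{z}_u^\top\tilde{\bm{q}}]\le C\|\bm{z}_u\|$; similarly $\mathbb{E}_{Q_{\text{pop}}}[\log pop(\cdot)]\le\log\max pop(\cdot)$ since each $pop(i)\le\max pop(\cdot)$.

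The main obstacle is controlling the normalizer ratio $\log\frac{Z}{Z_{\text{pop}}}$, which entangles the residuals and the popularities. The key trick is to rewrite it as an expectation against $Q_{\text{pop}}$: splitting $\bm{q}_j=(\bm{q}_j-\tilde{\bm{q}}_j)+\tilde{\bm{q}}_j$ inside $Z$ and substituting $\exp(\bm{z}_u^\top(\bm{q}_j-\tilde{\bm{q}}_j))=Q_{\text{pop}}(y_j|\bm{z}_u)Z_{\text{pop}}/pop(j)$ yields $\frac{Z}{Z_{\text{pop}}}=\sum_{j\in\mathcal{I}}Q_{\text{pop}}(y_j|\bm{z}_u)\frac{\exp(\bm{z}_u^\top\tilde{\bm{q}}_j)}{pop(j)}=\mathbb{E}_{Q_{\text{pop}}}[\exp(\bm{z}_u^\top\tilde{\bm{q}})/pop(\cdot)]$. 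Bounding the numerator above by $\exp(C\|\bm{z}_u\|)$ and the denominator below by $\min pop(\cdot)$ gives $\frac{Z}{Z_{\text{pop}}}\le\frac{\exp(C\|\bm{z}_u\|)}{\min pop(\cdot)}$, hence $\log\frac{Z}{Z_{\text{pop}}}\le C\|\bm{z}_u\|-\log\min pop(\cdot)$. Adding the three bounds collapses the $\log\max pop(\cdot)$ and $-\log\min pop(\cdot)$ contributions into $\log\frac{\max pop(\cdot)}{\min pop(\cdot)}$ and the two residual contributions into $2C\|\bm{z}_u\|$, producing exactly the claimed upper bound.

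Finally, the strict lower bound $0<\mathcal{D}_{KL}$ follows from Gibbs' inequality (nonnegativity of the KL divergence), together with the observation that $Q_{\text{pop}}\neq Q$ whenever the quantization residuals $\tilde{\bm{q}}_i$ or the popularity weights are not constant across items, so the equality case of Gibbs' inequality is excluded.
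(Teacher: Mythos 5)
Your proof is correct and takes essentially the same route as the paper's: both arguments rest on the same two elementary bounds, $|\bm{z}_u^\top\tilde{\bm{q}}_i|\le C\Vert\bm{z}_u\Vert$ via Cauchy--Schwarz and $\min pop(\cdot)\le pop(i)\le\max pop(\cdot)$, applied to the likelihood ratio $Q_{\text{pop}}/Q$, with your rewriting of $Z/Z_{\text{pop}}$ as $\mathbb{E}_{Q_{\text{pop}}}\left[\exp(\bm{z}_u^\top\tilde{\bm{q}})/pop(\cdot)\right]$ being exactly the manipulation the paper performs implicitly when it expresses the ratio as a sum weighted by $\mathcal{E}(j)/\sum_k\mathcal{E}(k)$. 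The only differences are organizational (you bound three additive pieces of the log-ratio separately, the paper bounds the pointwise ratio by $\exp(2C\Vert\bm{z}_u\Vert)\,\max pop(\cdot)/\min pop(\cdot)$ in one step before taking logarithms), and your handling of the strict positivity $0<\mathcal{D}_{KL}$ is in fact more careful than the paper's, which omits that point entirely for this theorem.
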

The proof is attached in the Appendix.

\section{FastVAE}
We train Mult-VAE with sampled softmax, where we use the proposed proposals for item sampling (\textbf{FastVAE} for short). As shown in Section~\ref{sec:vae-cf}, the objective function in~Eq.\eqref{eq:obj_vae-cf} consists of two terms.

Regarding the first term, the expectation can be efficiently approximated by drawing a set of user vectors $\{\bm{z}_u^{(1)}, \bm{z}_u^{(2)},\cdots, \bm{z}_u^{(S)}\}$ from the variational posterior $q_\phi(\cdot|\bm{y}_u)$. By incorporating the sampled softmax, we draw a small set of candidate items $\Phi_u$ from one of our proposed samplers (i.e., MIDX\_Uni and MIDX\_Pop) and then the first term becomes:
\begin{displaymath}
	\begin{gathered}
		\mathbb{E}_{\bm{z}_u \sim q_\phi(\cdot|\bm{y}_u)}[\log p_\theta(\bm{y}_u|\bm{z}_u)]\\
		= \frac{1}{S}\sum_{s=1}^{S}\sum_{i\in \mathcal{I}}y_{ui} \log  \frac{\exp \left({\bm{z}_u^{(s)}}^\top \bm{q}_i - \log Q(y_i|\bm{z}_u^{(s)}) \right)} {\sum_{j\in \{i\} \cup \Phi_u} \exp \left({\bm{z}_u^{(s)}}^\top \bm{q}_j-\log Q(y_j|\bm{z}_u^{(s)}) \right)}.
	\end{gathered}
\end{displaymath}

For the second term, both the variational posterior $q_\phi(\bm{z}_u|\bm{y}_u)$ and the prior distribution $p(\bm{z}_u)$ follow Gaussian distributions, so that the KL divergence has a closed-form solution. Suppose $q_\phi(\bm{z}_u|\bm{y}_u)=\mathcal{N}(\bm{\mu}, \bm{\sigma}^2)$ and $p(\bm{z}_u)=\mathcal{N}(\bm{0}, \bm{1})$, the KL divergence is computed as:
\begin{displaymath}
	-KL\big(q_\phi(\bm{z}_u|\bm{y}_u)||p(\bm{z}_u)\big) = \frac{1}{2}(\log \bm{\sigma}^2-\bm{\mu}^2-\bm{\sigma}^2+\bm{1})^\top \bm{1}.
\end{displaymath}
All parameters can be jointly optimized in the objective function.

\section{Experiments}
\label{sec:experiments}
In the evaluation, the following three research questions are addressed. First, \textit{does FastVAE outperform the state-of-the-art baselines in terms of recommendation quality}? Second, \textit{how accurately do the proposal distributions approximate the softmax distribution}? Third, \textit{how efficiently are items sampled from the proposals}? More details about the experimental settings are referred to in the Appendix.  

\subsection{Experimental Settings}
\subsubsection{Datasets}
Experiments are conducted on the four public datasets for evaluation.
The \textbf{MovieLens10M}(shorted as ML10M) dataset is a classic movie rating dataset, whose ratings range from 0.5 to 5. We convert them into 0/1 indicating whether the user has rated the movie.
The \textbf{Gowalla} dataset includes users' check-ins at locations in a location-based social network and is much sparser than the MovieLens dataset.
The \textbf{Netflix} dataset is another famous movie rating dataset but with much more users.
The \textbf{Amazon} dataset is a subset of customers' ratings for Amazon books, where the rating scores are integers from 1 to 5, and books with scores higher than 4 are considered positive. For all the datasets, We filter out users and items with less than 10 interactions. The details are summerized in the Table~\ref{tab:dataset}.

\begin{table}[t]
	\caption{Dataset Statistics}
	\label{tab:dataset}
	\vspace{-0.3cm}
	\begin{tabular}{c|r|r|r|r}
		\toprule
		Dataset & \#User & \#Item & \#Interactions & Sparsity \\
		\midrule
		ML10M 	& 47,292 & 5,942 & 2,001,164 & 99.2879\%\\ 
		Gowalla & 29,858 &  40,988 & 1,027,464 & 99.9160\% \\
		Amazon 	& 56,257 & 50,154 & 1,418,076 & 99.9497\%\\
		Netflix & 422,624  & 17,618 & 53,417,358 & 99.2826\%\\
		\bottomrule
	\end{tabular}
\end{table} 

For each user, we randomly sample 80\% of interacted items to construct the history vector and fit the models to the training items. For evaluation, we take the user history to learn the necessary representations from the well-trained model and then compute metrics by looking at how well the model ranks the unseen history. 

\subsubsection{Baselines}\label{sec:baseline}
We compare our FastVAE with the following competing collaborative filtering models. The dimension of latent factors for users and items is set to 32 by default. Unless specified, we adopt the matrix factorization as the basic models.
\begin{itemize}[leftmargin=*]
  \item \textbf{WRMF}~\cite{hu2008collaborative, pan2008one}, weighted regularized matrix factorization, is a famous collaborative filtering method for implicit feedback. It sets a prior on uninteracted items associated with the confidence level of being negative. It learns parameters by alternating least square method in the case of square loss. We tune the parameter of the regularizer of uninteracted items within \{1,5,10,20,50,100,200,500\}. The coefficient of L2 regularization is fixed to 0.01.
  \item \textbf{BPR}~\cite{rendle2009bpr}, Bayesian personalized ranking for implicit feedback, utilizes the pair-wise logit ranking loss between positive and negative samples. For each pair of interacted user and item, BPR randomly samples several uninteracted items of the user for training and applies stochastic gradient descent for optimization. We set the number of sampled negative items as 5 and tune the coefficient of regularization with \{2,1,0.5\}. 
  \item \textbf{WARP-MF}~\cite{weston2010large} uses the weighted approximate-rank pair-wise loss function for collaborative filtering. Given a positive item, it uniformly samples negative items until the rating of the sampled item is higher. The rank is estimated based on the sampling trials. We use the implementation in the lightFM\footnote{https://github.com/lyst/lightfm}. The maximal number of trails is set to 50. The coefficient of the regularization is tuned within \{0.05, 0.01, 0.005, 0.001\} and the learning rate is tuned within $\{10^{-3}, 10^{-4}, 10^{-5}, 10^{-6}\}$.
  \item \textbf{AOBPR}~\cite{rendle2014improving} improves the BPR with adaptive sampling method. We use the version implemented in LibRec\footnote{https://github.com/guoguibing/librec}. The parameter for the geometric distribution is set to 500 and the learning rate is set to 0.05. We tune the coefficient of the regularization within \{0.005, 0.01, 0.02\}.
  \item \textbf{DNS}~\cite{zhang2013optimizing} dynamically chooses items according to the predicted ranking list for the topk recommendation. Specifically, the dynamic sampler first draws samples uniformly from the item set and the item with the maximum rating is selected. 
  \item \textbf{PRIS}~\cite{lian2020personalized}    utilizes the importance sampling to the pairwise ranking loss for personalized ranking and assigns the sampling weight to the sampled items. We adopt the joint model implemented in the open resource code\footnote{https://github.com/DefuLian/PRIS}. The number of clusters is set to 16.
  \item  \textbf{Self-Adversarial (SA)}~\cite{sun2019rotate}, a self-supervised method for negative sampling, is recently proposed for the recommendation. It utilizes uniform sampling and assigns the sampling weight for the negative item depending on the current model.
  \item \textbf{Mult-VAE}~\cite{liang2018variational}, variational autoencoders for collaborative filtering, is the work of learning user representations with variational autoencoders in recommendation systems. It learns the user representation by aiming at maximizing the likelihood of user click history. We mainly focus on the comparison with VAE and we will introduce the parameter setting in the following part. 
\end{itemize}
In addition to these recommendation algorithms, we conduct experiments with the following samplers.
\begin{itemize}[leftmargin=*]
  \item \textbf{Uniform} sampler is a common sampling strategy that randomly draws negatives from the set of items for optimization, widely used for sampled softmax. 
  \item \textbf{Popularity} sampler is correlated with the popularity of items, where the items with higher popularity have a greater probability of being sampled. The popularity is computed as $\log(f_i+1)$ where $f_i$ is the occurring frequency of item $i$. We normalize the popularity of all items for sampling. 
  \item \textbf{Kernel} based sampler~\cite{blanc2018adaptive} is a recent method for adaptively sampled softmax, which lowers the bias by the non-negative quadratic kernel. Furthermore, the kernel-based sampler is implemented with divide and conquer depending on the tree structure.
\end{itemize}

\subsubsection{Evaluation Metrics}
Two standard metrics are utilized for evaluating the quality of recommendation, Normalized Discounted Cumulative Gain (NDCG) and Recall. A higher NDCG@$k$ represents the positive items in the test data are ranked higher in the ranking list. Recall@$k$ measures the fraction of the positive items in the test data. All algorithms are fine-tuned based on NDCG@50. After that, we run 5 times cross-validation. 

\subsubsection{Experiment Settings}
We develop the proposed algorithms FastVAE with Pytorch in a Linux system (2.10 GHz Intel Xeon Gold 6230 CPUs and a Tesla V100 GPU). We utilize the Adam algorithm with a weight decay of 0.01 for optimization. We implement the variational autoencoder with one hidden layer and the generative module would be $[M \rightarrow 200 \rightarrow 32]$. The active function between layers is ReLu by default. The input of user history is dropout with a probability of 0.5 before the linear layers. The batch size is set to 256 forall the datasets. The learning rate is tuned over \{0.1,0.01,0.001,0.0001\}. We train the models within 200 epochs.

For the FastVAE, the number of samples is set to 200 for the MovieLens10M and Netflix dataset, 1000 for the Gowalla dataset, 2000 for the Amazon dataset. There are 16 codewodes for each code book. Regarding the popularity based strategy, we follow the same popularity function $\log(f_i + 1)$.

\subsection{Comparisons with Baselines}
The comparisons of recommendation quality (i.e., Recall@50 and NDCG@50) with baselines is reported in Table~\ref{tab:comparisons_baselines}, which are based on $5$-time independent trials. We report the results of FastVAE with MIDX\_Pop here. We have the following findings.

\begin{table*}[ht]
	\centering
	\setlength{\tabcolsep}{3pt}
	\caption{Comparisons with baselines w.r.t NDCG@50 and Recall@50 ($\Delta=10^{-4}$).}
	\label{tab:comparisons_baselines}
	\vspace{-0.3cm}
	\begin{tabular}{c|c|c|c|c|c|c|c|c}
		\toprule
		& \multicolumn{2}{c|}{MovieLens-10M}&\multicolumn{2}{c|}{Gowalla} & \multicolumn{2}{c|}{Netflix} & \multicolumn{2}{c}{Amazon}  \\ 
		\cmidrule(){2-9}
		& NDCG@50 & Recall@50 & NDCG@50 & Recall@50 & NDCG@50 & Recall@50 & NDCG@50 & Recall@50 \\ 
		\midrule
		WRMF  & 0.3194$\pm$0.3$\Delta$ & 0.4967$\pm$0.6$\Delta$ & 0.1316$\pm$0.1$\Delta$ & 0.2223$\pm$0.1$\Delta$ & 0.3020$\pm$0.1$\Delta$ & 0.3653$\pm$0.6$\Delta$ & 0.0919$\pm$1.4$\Delta$ & 0.1802$\pm$3.0$\Delta$\\
		BPR   & 0.2915$\pm$2.4$\Delta$ & 0.4642$\pm$3.2$\Delta$& 0.1216$\pm$1.1$\Delta$ & 0.1978$\pm$1.9$\Delta$ & 0.2742$\pm$1.7$\Delta$ & 0.3283$\pm$1.2$\Delta$ & 0.0740$\pm$2.2$\Delta$ & 0.1441$\pm$4.2$\Delta$\\
		WARP-MF  & 0.2968$\pm$2.3$\Delta$ & 0.4785$\pm$3.3$\Delta$ & 0.1273$\pm$0.7$\Delta$ & 0.2073$\pm$1.7$\Delta$ & 0.2953$\pm$1.2$\Delta$ & 0.3539$\pm$1.2$\Delta$ & 0.0798$\pm$1.4$\Delta$ & 0.1615$\pm$3.7$\Delta$\\
		AOBPR & 0.2934$\pm$0.5$\Delta$ & 0.4753$\pm$0.3$\Delta$ & 0.1385$\pm$0.4$\Delta$ & 0.2369$\pm$0.8$\Delta$ & 0.2952$\pm$0.4$\Delta$ & 0.3560$\pm$0.8$\Delta$ & 0.0906$\pm$1.7$\Delta$ & 0.1763$\pm$2.5$\Delta$\\
		DNS & 0.3153$\pm$2.7$\Delta$ & 0.4988$\pm$3.4$\Delta$ & 0.1622$\pm$1.5$\Delta$ & 0.2761$\pm$2.9$\Delta$ & 0.2974$\pm$2.4$\Delta$ & 0.3594$\pm$2.5$\Delta$ & 0.1119$\pm$1.7$\Delta$ & 0.2186$\pm$3.4$\Delta$ \\
		PRIS & 0.3162$\pm$1.6$\Delta$& 0.4937$\pm$3.3$\Delta$ &0.1657$\pm$2.7$\Delta$ & 0.2736$\pm$3.1$\Delta$ & 0.2975$\pm$2.5$\Delta$ & 0.3608$\pm$2.9$\Delta$ & 0.1189$\pm$2.9$\Delta$ & 0.2244$\pm$4.1$\Delta$\\
		SA & 0.3237$\pm$1.4$\Delta$ & 0.5066$\pm$2.0$\Delta$ & 0.1704$\pm$1.2$\Delta$ & 0.2866$\pm$1.8$\Delta$ & 0.3177$\pm$3.5$\Delta$ & 0.3784$\pm$2.8$\Delta$ & 0.1378$\pm$1.8$\Delta$ & 0.2401$\pm$2.9$\Delta$ \\
		Mult-VAE& 0.3206$\pm$2.5$\Delta$ & 0.5037$\pm$2.7$\Delta$ & 0.1751$\pm$3.2$\Delta$ & 0.2911$\pm$5.7$\Delta$ & 0.3227$\pm$2.8$\Delta$ & 0.3841$\pm$3.1$\Delta$ & 0.1441$\pm$0.9$\Delta$ & 0.2483$\pm$2.9$\Delta$\\
		\textbf{FastVAE} & \textbf{0.3275}$\pm$2.5$\Delta$ & \textbf{0.5078}$\pm$2.4$\Delta$ & \textbf{0.1797}$\pm$2.0$\Delta$ & \textbf{0.2971}$\pm$2.1$\Delta$ & \textbf{0.3238}$\pm$3.0$\Delta$ & \textbf{0.3845}$\pm$2.7$\Delta$ & \textbf{0.1404}$\pm$2.1$\Delta$ & \textbf{0.2434}$\pm$3.5$\Delta$ \\ 
		\bottomrule
	\end{tabular}
\end{table*}

\textit{Finding 1: By using the MIDX-like proposals, FastVAE with sampled softmax could behaves almost as well as Multi-VAE with full softmax and even perform slightly better.} Surprisingly, the averaged relative improvements are even up to 0.64\% and 0.25\% on four datasets in terms of NDCG@50 and Recall@50, respectively. This implies the MIDX-like proposals could accurately approximate the softmax distribution and sample informative items. The improvements may lie in the oversampling of less popular items.

\textit{Finding 2: FastVAE outperforms all state-of-the-art baselines on two datasets.} The averaged relative improvements over the best baseline are up to 2.61\% and 1.72\% in terms of NDCG@50 and Recall@50, respectively. This indirectly implies the effectiveness of the proposed samplers at sampling high-informative items. Note that WRMF usually works better than static-sampling-based baselines, as WRMF treats all unobserved data as negative. However, the lack of differentiation among them leads to sub-optimal solutions compared to Mult-VAE, whose objective function (i.e. full softmax) also takes all items into account.

\subsection{Comparisons with Different Samplers}
\subsubsection{Divergence between Proposals and the Softmax Distribution}
In order to understand how accurately the proposal distributions approximate the softmax distribution, we investigate the divergence between the proposals and the softmax distribution on the MovieLens10M dataset. In particular, we randomly select a user, and compute her/his softmax distribution with a randomly-initialized model and well-trained model, respectively. Regarding the proposals, we sample 100,000 items from each of them and then plot the cumulative probability distribution. Regarding MIDX-like proposals in both cases, 64 clusters in each quantization will be used. The results of these two cases are reported in Figure~\ref{fig:distribution_start_end}, where items are sorted by popularity for better comparison. Note that since we have observed similar results from multiple users, only one user's result is reported for illustration. We have the following findings.

\begin{figure}[t]
	\centering
	\subfigure[Randomly-initialized model]{\includegraphics[width=0.49\columnwidth]{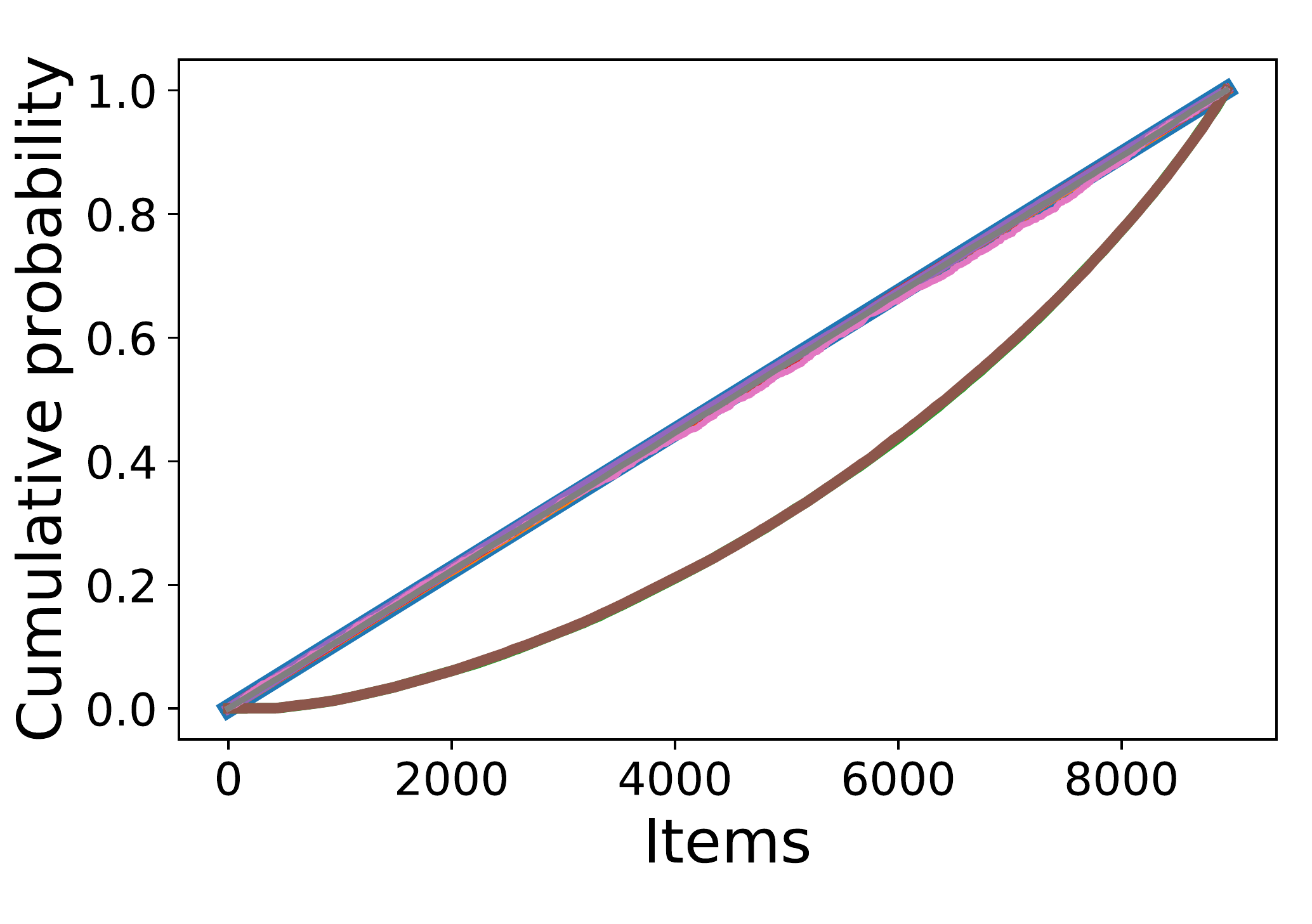}}
	\subfigure[Well-trained model]{\includegraphics[width=0.49\columnwidth]{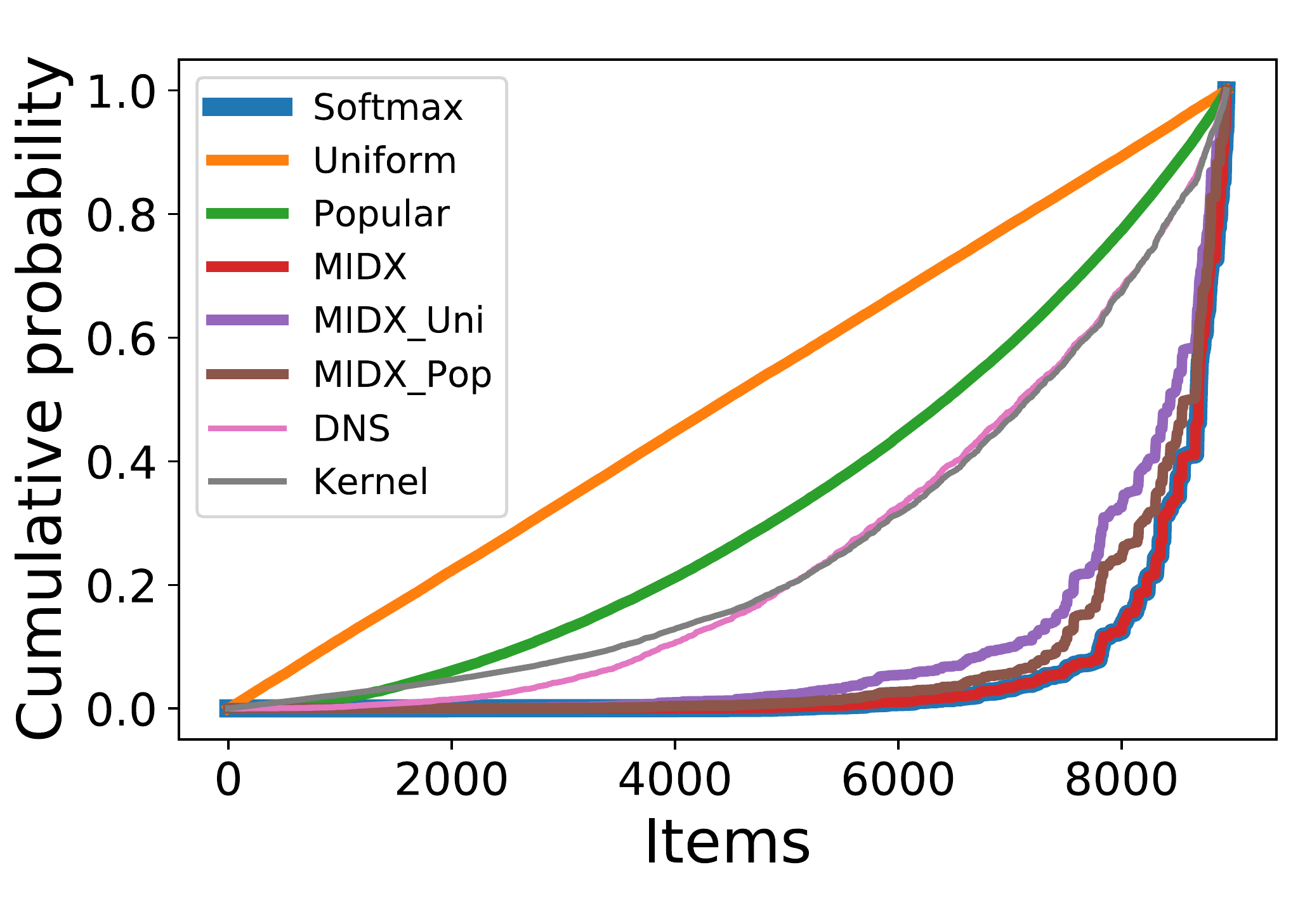}}
	\vspace{-0.3cm}
	\caption{Cumulative probability distribution of different samplers. Items are sorted by popularity.} 
	\label{fig:distribution_start_end}
\end{figure}

\textit{Finding 1: The MIDX sampler is as accurate as softmax, based on full coincidence between softmax and MIDX in both cases.} This is because the decomposition of the softmax distribution is fully exact, as shown in Section~\ref{sec:samplers}. However, since the item should be sampled from the residual softmax distribution, the time cost is so high that the MIDX sampler is not directly used in practice.

\textit{Finding 2: When the model is well-trained, the MIDX-variant samplers are much closer to the softmax distribution than Kernel and DNS.} This implies that MIDX-variant samplers reduce the bias of sampled softmax. Though Kernel-based sampling also directly approximates the softmax distribution, it is almost as close as DNS to the softmax. Moreover, the Kernel-based sampling approximates the probability of long-tailed items less accurately. This is consistent with the fact the Kernel-based sampler oversamples items with negative logits \cite{blanc2018adaptive} since long-tailed items are more likely to yield negative logits. 

\textit{Finding 3: The MIDX\_Uni sampler approximates the softmax distribution a little less accurately than MIDX\_Pop.} 
These samplers mainly vary in item sampling in the last stage, but all depend on item vector quantization. This implies the effect of inverted multi-index at approximate sampling. Moreover, compared to the softmax distribution, these samplers are more likely to sample less popular items, evidenced by that their curves are slightly above the softmax.

\textit{Finding 4: The MIDX-variant samplers can capture the dynamic update of the model.} In particular, when the model is well-trained, the MIDX-variant samplers well approximate the softmax; when the model is only randomly initialized, most dynamic samplers are similar to the static samplers. The latter observation is reasonable since randomized representations do not have cluster structures. This indicates that along with the training course of the model, the MIDX-variant samplers can be more and more informative.

\subsubsection{Effectiveness Study of Samplers}
To validate the effectiveness of the proposed MIDX-based samplers, we investigate the recommendation performance during training epochs with different samplers aforementioned in the section \ref{sec:baseline}. We report the changing curve of NDCG@50 and Recall@50 on the Gowalla and Netflix datasets in Figure~\ref{fig:samplers_effect}. We sample 1,000 items for the Gowalla dataset and 200 items for the Netflix dataset with all the tested samplers. The number of the sampled items are greatly smaller than the number of total items. 
\begin{figure}
	\subfigure[Gowalla]{
		\includegraphics[width=0.485\columnwidth]{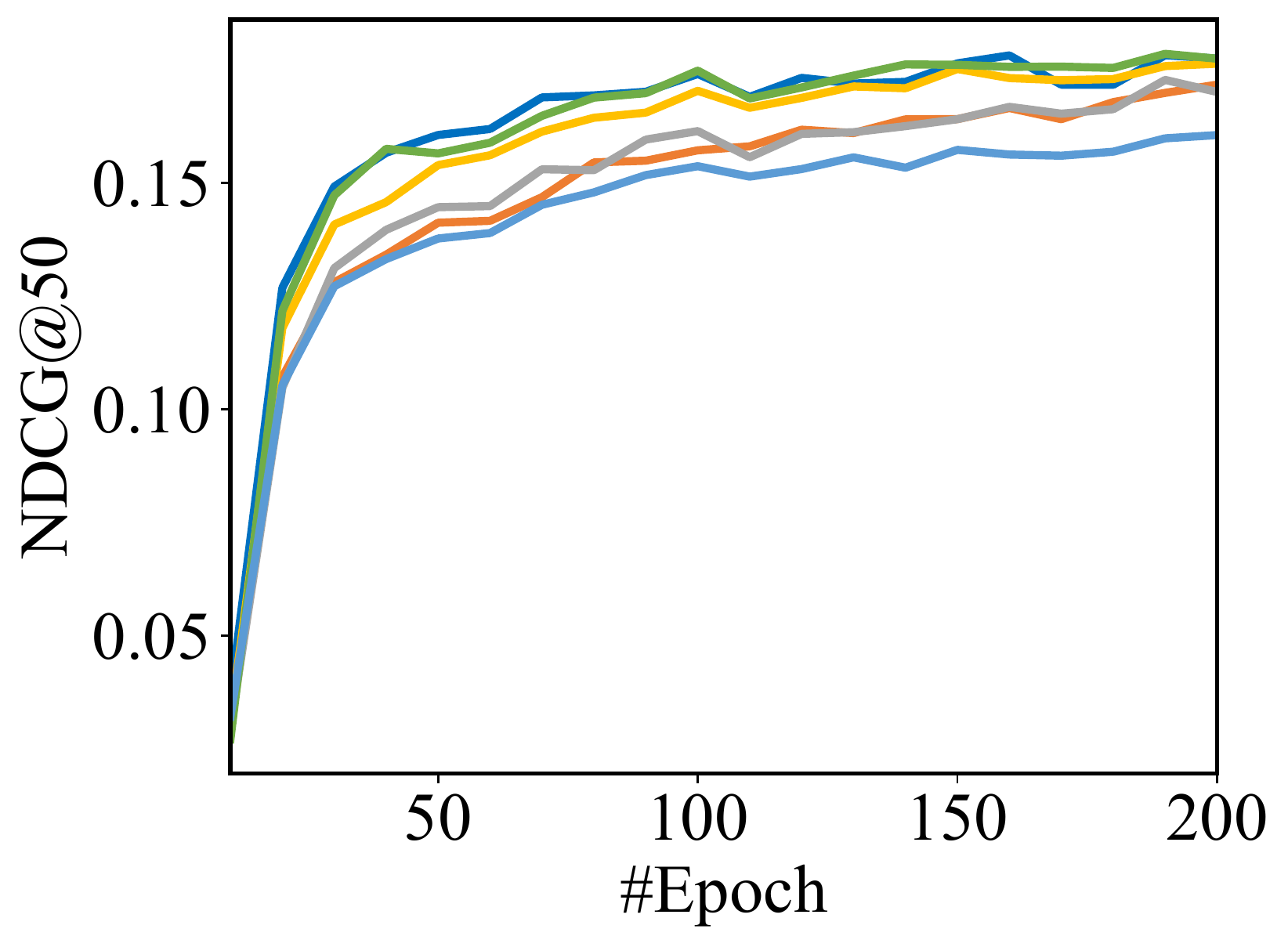}}
	\subfigure[Netflix]{
		\includegraphics[width=0.485\columnwidth]{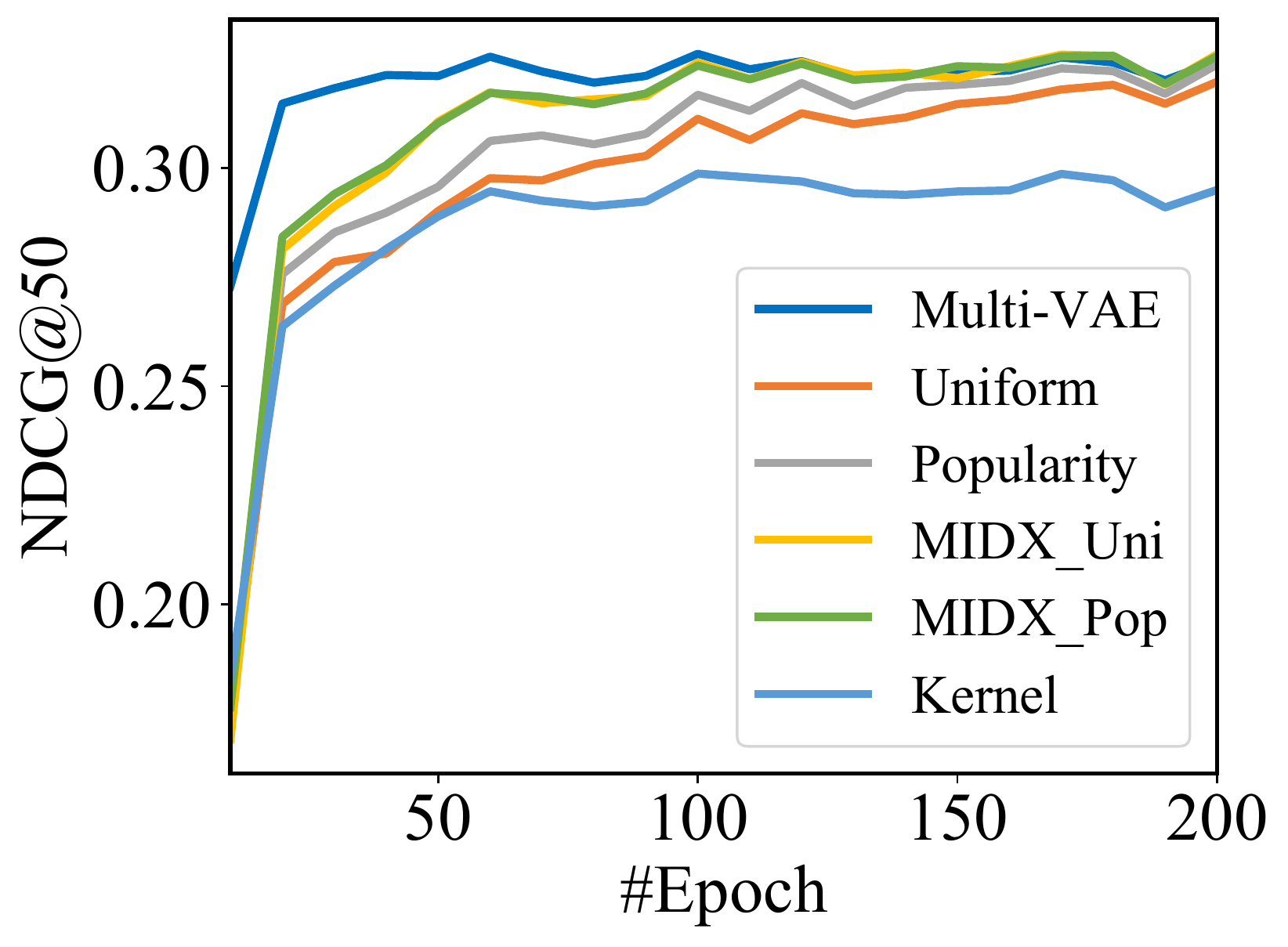}}
	\vspace{-0.5cm}
	\caption{Effectiveness of different samplers in terms of recommendation performance.}
	\label{fig:samplers_effect}
\end{figure}

From the figure, we have the following finding: \textit{The MIDX-based samplers contributes to faster convergence compared to the baseline samplers.} Compared with the static samplers, i.e. Uniform and Popularity, the MIDX-based samplers tends to sample more informative items so that they defeat the samplers during the whole training process. Although the Kernel sampler has a better estimation of the softmax distribution and can capture the dynamics of the softmax distribution, the sampling probability is not well attached for calculating the sampled softmax loss, so that it perform bad in terms of the recommendation quality. On the more sparse dataset, Gowalla, the MIDX\_Uni and MIDX\_Pop perform as well as the Multi-VAE and even has slightly better performance. This may implies the oversampling of the full softmax.

\subsubsection{Efficiency Study of Samplers}
Though MIDX-variant samplers could produce a good approximation to the softmax, it is still unclear how efficiently items are sampled. Therefore, we increase the number of items in the Amazon dataset while keeping the sample size at 200. Experiments are run for 5 times and we report the average running time of each epoch w.r.t the sampling and training time in Figure~\ref{fig:samplers_time}. The \textit{training Time} contains the sampling time and inference time since the sampling procedure is implemented after the user encoding. We compare the running time with the Multi-VAE and report the speedup. The running time of the Kernel sampler is not reported here because it is difficult to implement in GPUs and is substantially longer than the other samplers.

\begin{figure}[t]
	\centering
	\subfigure[Sampling Time]{
		\includegraphics[width=0.475\columnwidth]{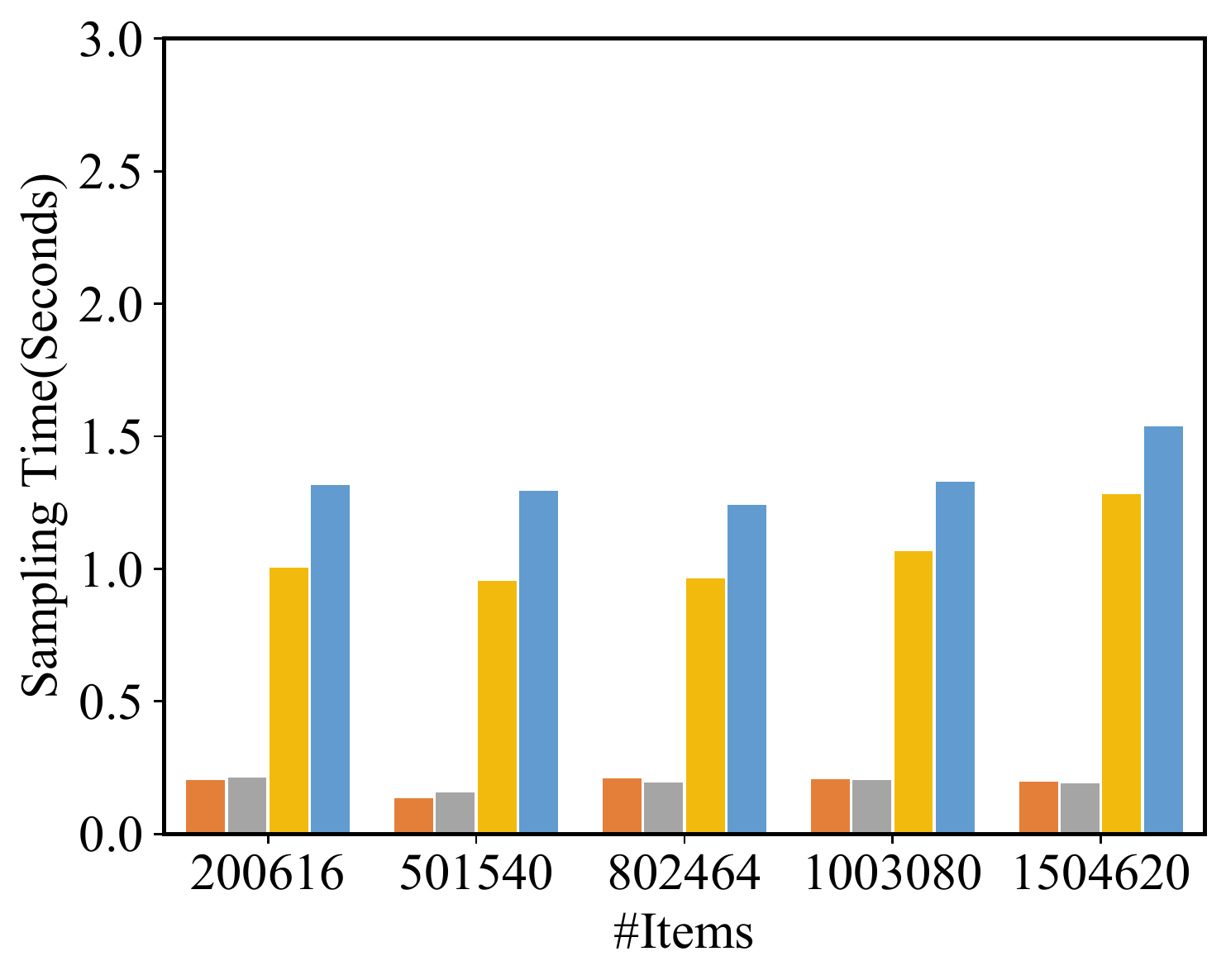}
	}
	\subfigure[Training Speedup]{
		\includegraphics[width=0.47\columnwidth]{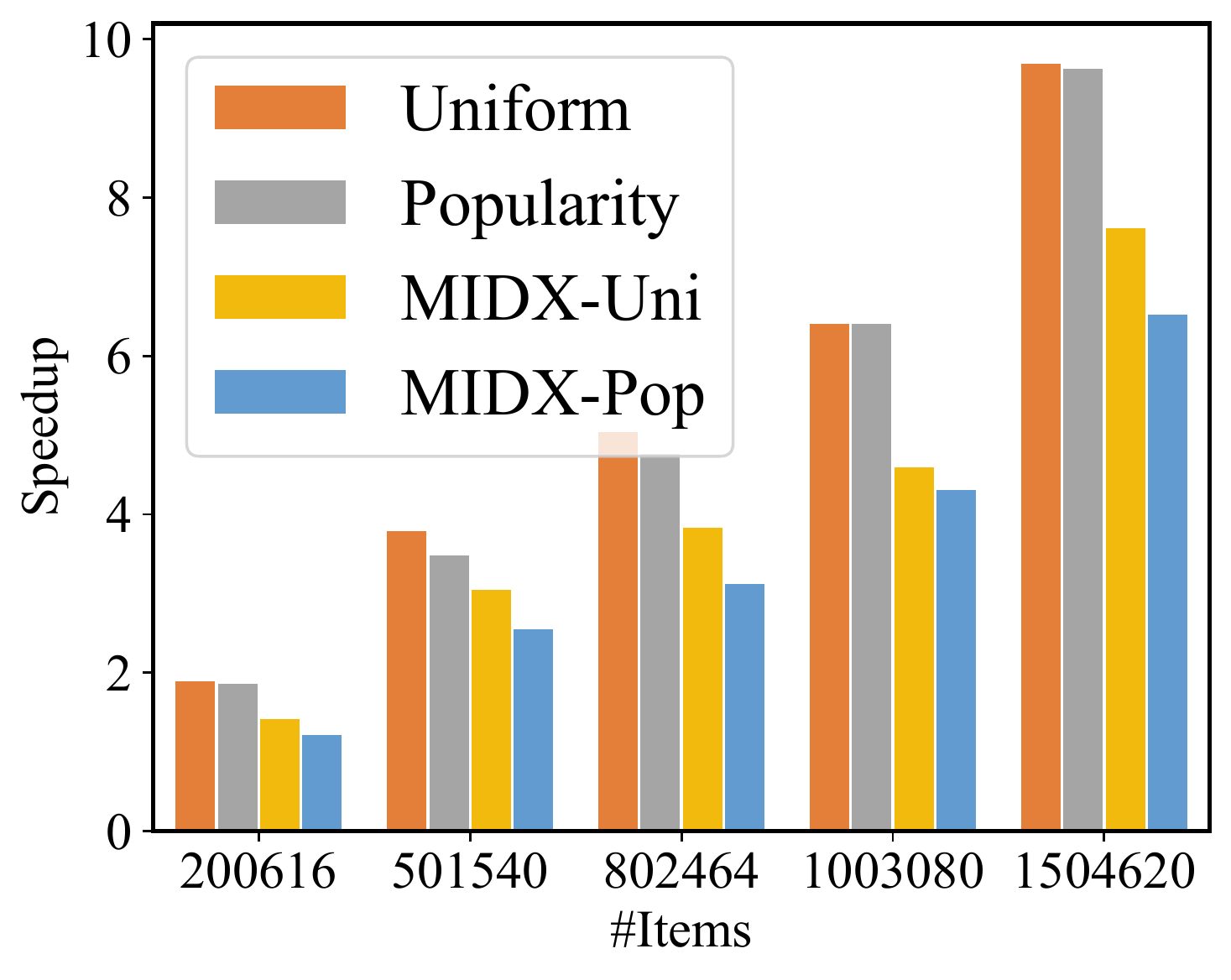}
	}
	\vspace{-0.5cm}
	\caption{Average Running Time v.s. Item numbers.}	\label{fig:samplers_time}
\end{figure}

From this figure, we observe that \textit{The MIDX\_Uni is efficient than MIDX\_Pop sampler, but less efficient than static samplers.} The static samplers require less than 0.5 seconds to sample items during each epoch, while the MIDX-based samplers take about 1.5 seconds. Indeed, as the number of items increases, the training time of Multi-VAE increases from 2.3(s) to 10.2(s), which is substantially longer than the sampling time. With the increasing of items, the training process is substantially more accelerated. The MIDX-based sampler also accelerates the training time more than six times when the number of items reaches about 1.5 million, confirming the suggested samplers' great efficiency.

\subsection{Sensitivity Analysis}
\subsubsection{Number of negative samples.} We conduct experiments on the Gowalla datasets with the \textit{Uniform} and \textit{MIDX\_Uni} sampler, as shown in Figure~\ref{fig:diff_negs}. The numbers of negative items are varied in \{200, 1000,5000\}.  
\textit{Our proposed MIDX based samplers show superior performances even if the number of sampled items is modest.} With the increasing of the sample numbers, the two samplers perform better in terms of NDCG@50. When the number of negative items is greatly small, the MIDX\_Uni also improves dramatically, indicating the good estimation of the softmax.

\begin{figure}[t]
	\centering
	\subfigure[Number of Negative Samples]{\includegraphics[width=0.485\columnwidth]{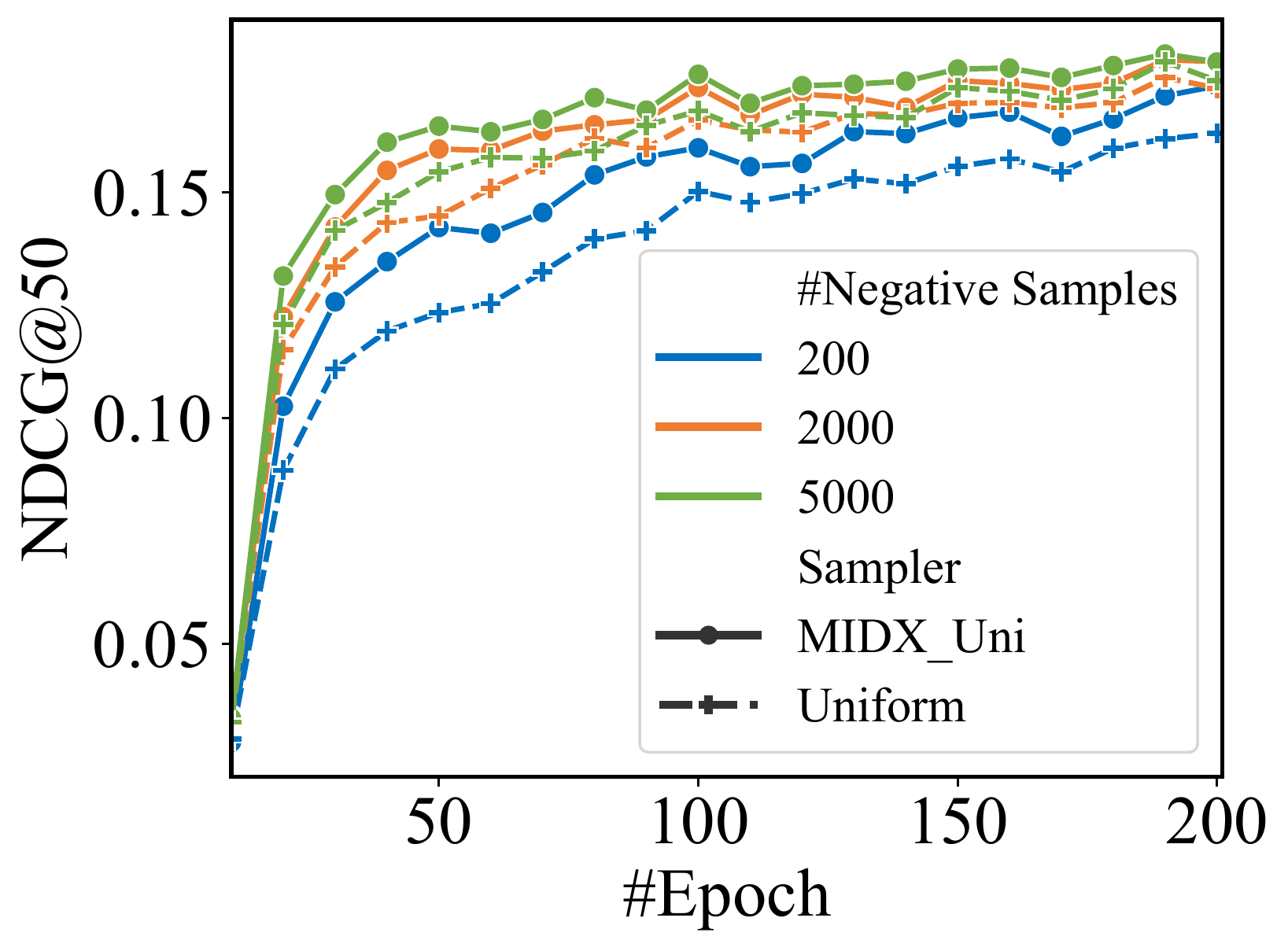}
	\label{fig:diff_negs}}
	\subfigure[Number of Clusters]{\includegraphics[width=0.485\columnwidth]{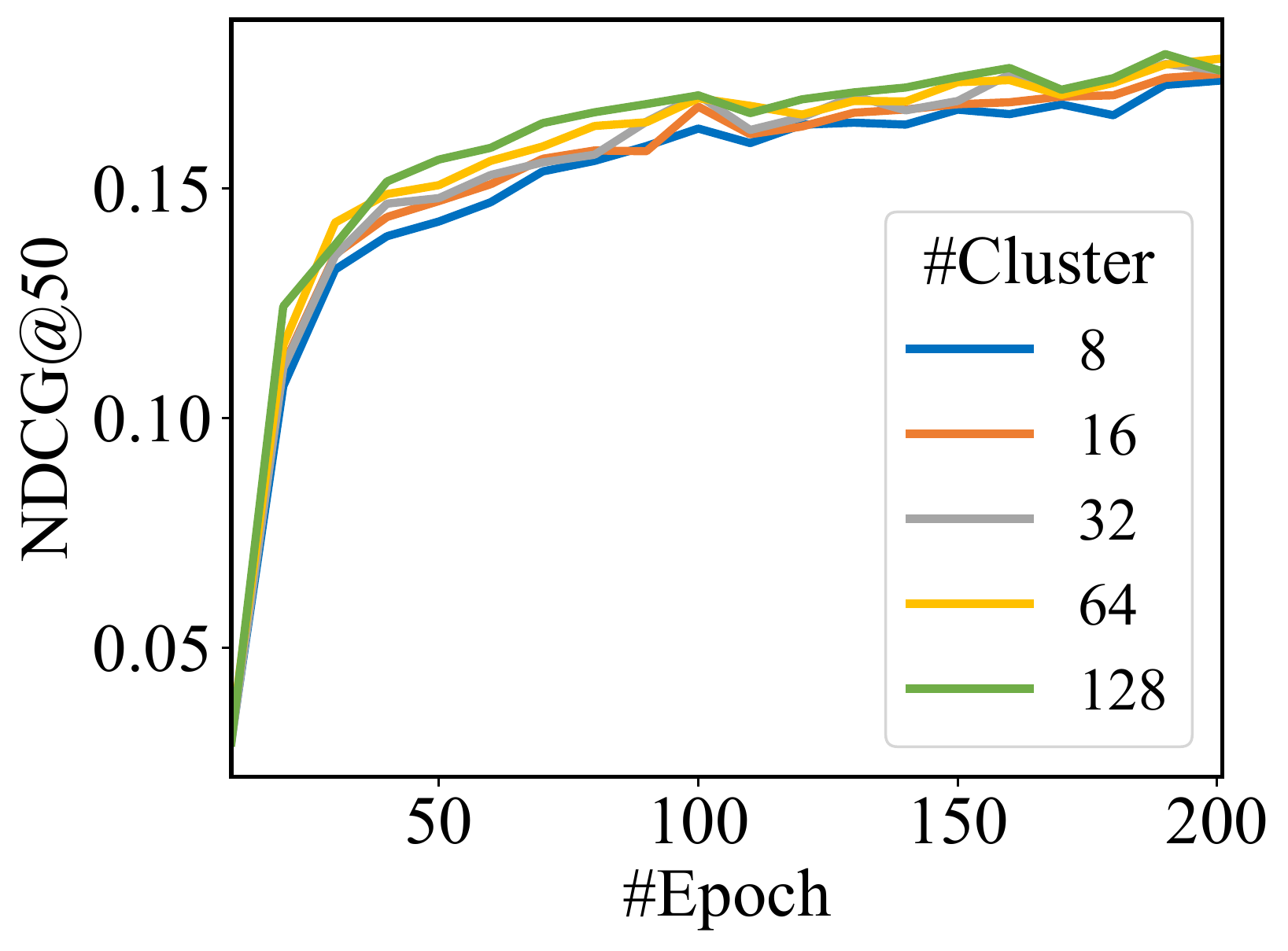}
	\label{fig:diff_cluster}}
	\vspace{-0.5cm}
	\caption{Sensitive analysis on the Gowalla dataset.}
\end{figure}

\subsubsection{Number of clusters.}
The number of clusters can greatly influence the performance of the approximation, as analysed in the Theorem~\ref{theorem:kl_midx_uni}. We further validate the influence of the cluster numbers in terms of the recommendation quality. The numbers of clusters are varied in $\{8,16,32,64,128\}$. We report the running curve in Figure~\ref{fig:diff_cluster}. With the increasing of the cluster number, the MIDX\_Uni sampler performs better in the initial training epochs, indicating the better estimation with more clusters. Meanwhile, the MIDX\_Uni also behave well with less clusters, demonstrating the robustness of the MIDX\_Uni sampler with respect to the cluster number.

\section{Conclusion} 
In this paper, we discover the high-quality approximation of the softmax distribution by decomposing the softmax probability with the inverted multi-index, and design efficient sampling procedures, from which items can be independently sampled in sublinear or even constant time. These approximate samplers are exploited for fast training the variational autoencoder for collaborative filtering. The experiments on the three public real-world datasets demonstrate that the FastVAE outperforms the state-of-the-art baselines in terms of sampling quality and efficiency. 

\begin{acks}
  The work is supported by the National Natural Science Foundation of China (No. 62022077, 61972069, 61836007 and 61832017), and Shenzhen Municipal Science and Technology R\&D Funding Basic Research Program (JCYJ20210324133607021).
\end{acks}

\bibliographystyle{ACM-Reference-Format}
\bibliography{myref}

\appendix
\section{Appendix}

In the appendix, we provide the proofs of theorem 5.1, theorem 5.2,theorem 5.3 and theorem 5.4.

For better illustration, we review some important notations here. In the following, denote by $\mathcal{I}$ the set of items, $\bm{z}_u$ a vector of the user $u$, $\bm{q}_i$ a vector of the item $i$. The softmax probability with the inner-product logits can be compudated by:
\begin{displaymath}
	Q(y_i|\bm{z}_u) = \frac{\exp(\bm{z}_u^\top \bm{q}_i)}{\sum_{j\in \mathcal{I}} \exp(\bm{z}_u^\top \bm{q}_j)}.
\end{displaymath}
Particularly, $\bm{q}_i$ can be decomposed based on the codebooks. It is formulated as $\bm{q}_i=\bm{c}^1_{k_1} \oplus \bm{c}^2_{k_2}+\bm{\tilde{q}}_i$ where $\bm{c}^i_{k_i} $ is the $k_i$-th codeword index of $i$-th codebook and $\bm{\tilde{q}}_i$ is the residual vector.

\begin{theorem}[\textbf{Theorem 4.1}]
	Assume $\bm{z}_u=\bm{z}_u^1 \oplus \bm{z}_u^2$ is a vector of a user $u$, $\bm{q_i}=\bm{c}^1_{k_1} \oplus \bm{c}^2_{k_2}+\bm{\tilde{q}}_i$ is a vector of an item $i$, $\Omega_{k_1,k_2}$ is the set of items which are assigned to $\bm{c}^1_{k_1}$ in the first subspace and $\bm{c}^2_{k_2}$ in the second subspace. The softmax probability $Q(y_i|\bm{z}_u)$ can be decomposed as follows:
	\begin{equation}
		\begin{gathered}
			Q(y_i|\bm{z}_u)=P_u^1(k_1)\cdot P_u^2(k_2|k_1)\cdot P_u^3(y_i|k_1,k_2), \\
			P^1_u(k_1)=\frac{\psi_{k_1} \exp({\bm{z}_u^1}^\top \bm{c}^1_{k_1})}{\sum_{k=1}^K \psi_{k}\exp({\bm{z}_u^1}^\top \bm{c}^1_{k})}, \\
			P_u^2(k_2|k_1)=\frac{\omega_{k_1, k_2}\exp({\bm{z}_u^2}^\top \bm{c}^2_{k_2})}{\underbrace{\sum_{k=1}^K \omega_{k_1,k} \exp({\bm{z}_u^2}^\top  \bm{c}^2_{k})}_{\psi_{k_1}}},\\
			P_u^3(y_i|k_1,k_2)=\frac{\exp(\bm{z}_u^\top \bm{\tilde{q}}_i)} {\underbrace{\sum_{j \in \Omega_{k_1, k_2}} \exp({\bm{z}_u^\top} \bm{\tilde{q}}_j)}_{\omega_{k_1,k_2}} }.
		\end{gathered}
	\end{equation}
\end{theorem}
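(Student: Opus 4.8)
The plan is to prove the identity by a direct computation that factors the numerator of $Q(y_i|\bm{z}_u)$ subspace-by-subspace, rewrites the partition function as a sum over the codeword cells, and then verifies that the three claimed factors multiply back to the original softmax via a cancellation.

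First I would exploit the two structural facts supplied by the inverted multi-index. Since $\bm{z}_u=\bm{z}_u^1\oplus\bm{z}_u^2$ and the codeword approximation of $\bm{q}_i$ is $\bm{c}^1_{k_1}\oplus\bm{c}^2_{k_2}$, the inner product of concatenated vectors splits additively, so that
\begin{displaymath}
	\bm{z}_u^\top\bm{q}_i={\bm{z}_u^1}^\top\bm{c}^1_{k_1}+{\bm{z}_u^2}^\top\bm{c}^2_{k_2}+\bm{z}_u^\top\bm{\tilde{q}}_i.
\end{displaymath}
Exponentiating turns the numerator $\exp(\bm{z}_u^\top\bm{q}_i)$ into a product of three factors, one exponential per subspace codeword plus one for the residual.

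Next I would rewrite the partition function $\sum_{j\in\mathcal{I}}\exp(\bm{z}_u^\top\bm{q}_j)$ by grouping items according to the cells $\Omega_{k,k'}$. Because every item is assigned to a unique codeword in each subspace, the cells $\{\Omega_{k,k'}\}_{k,k'}$ form a partition of $\mathcal{I}$; within a fixed cell all items share the same codeword pair, so the two codeword exponentials are constant over $j\in\Omega_{k,k'}$ and pull out of the inner sum, leaving exactly $\omega_{k,k'}=\sum_{j\in\Omega_{k,k'}}\exp(\bm{z}_u^\top\bm{\tilde{q}}_j)$. Summing over $k'$ first produces $\psi_k=\sum_{k'}\omega_{k,k'}\exp({\bm{z}_u^2}^\top\bm{c}^2_{k'})$, and summing over $k$ then yields $\sum_k\psi_k\exp({\bm{z}_u^1}^\top\bm{c}^1_{k})$, which is precisely the normalizer of $P_u^1$.

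Finally, I would multiply the three claimed factors and track the cancellation: the $\psi_{k_1}$ in the numerator of $P_u^1$ cancels the denominator of $P_u^2$, and the $\omega_{k_1,k_2}$ in the numerator of $P_u^2$ cancels the denominator of $P_u^3$. What remains is the product of the three exponentials from the first step divided by the single normalizer computed in the second step, which is exactly $Q(y_i|\bm{z}_u)$. The only point demanding care is the second step, namely justifying that the $\Omega_{k,k'}$ genuinely partition $\mathcal{I}$ and that the codeword exponentials are constant within each cell; both follow immediately from the unique-assignment property of product quantization, so I expect the argument to be essentially bookkeeping rather than to present a substantive obstacle.
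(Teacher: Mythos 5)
Your proposal is correct and follows essentially the same route as the paper's proof: split the inner product additively across subspaces, regroup the partition function over the cells $\Omega_{k,k'}$ to obtain $\omega_{k_1,k_2}$ and $\psi_{k_1}$, and let these quantities cancel telescopically between the three factors. The only cosmetic difference is direction — the paper factors $Q(y_i|\bm{z}_u)$ forward into the product, while you multiply the product and verify it collapses back to $Q(y_i|\bm{z}_u)$ — which is the same computation read in reverse.
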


\begin{proof}
	\begin{displaymath}
		\small
		\begin{split}
			Q(y_i|\bm{z}_u) =& \frac{\exp(\bm{z}_u^\top \bm{q}_i)}{\sum_{j\in \mathcal{I}} \exp(\bm{z}_u^\top \bm{q}_j)}\\
			= & \frac{\exp({\bm{z}_u^1}^\top \bm{c}^1_{k_1}) \exp({\bm{z}_u^2}^\top \bm{c}^2_{k_2}) \exp(\bm{z}_u^\top \bm{\tilde{q}}_i)}
			{\sum_{k=1}^K \exp( {\bm{z}_u^1}^\top  \bm{c}^1_k )  \underbrace{\sum_{k'=1}^K \exp( {\bm{z}_u^2}^\top \bm{c}^2_{k'} ) \sum_{j \in \Omega_{k,k'}} \exp({\bm{z}_u^\top} \bm{\tilde{q}}_{j})}_{\Psi_k}} \\
			=& \frac{\Psi_{k_1}\exp({\bm{z}_u^1}^\top \bm{c}^1_{k_1}) }{\sum_{k=1}^K \Psi_k\exp( {\bm{z}_u^1}^\top  \bm{c}^1_k )} \cdot \frac{\exp({\bm{z}_u^2}^\top \bm{c}^2_{k_2}) \exp(\bm{z}_u^\top \bm{\tilde{q}}_i)}{\Psi_{k_1}} \\
			=& P_u^1(k_1)\cdot \frac{\exp({\bm{z}_u^2}^\top \bm{c}^2_{k_2}) \exp(\bm{z}_u^\top \bm{\tilde{q}}_i)}
			{\sum_{k=1}^K \exp( {\bm{z}_u^2}^\top \bm{c}^2_{k} ) \underbrace{\sum_{j \in \Omega_{k_1,k}} \exp({\bm{z}_u^\top} \bm{\tilde{q}}_{j})}_{\omega_{k_1, k}}}\\
			=&P_u^1(k_1)\cdot \frac{\omega_{k_1, k_2}\exp({\bm{z}_u^2}^\top \bm{c}^2_{k_2})}{\sum_{k=1}^K \omega_{k_1, k}\exp( {\bm{z}_u^2}^\top \bm{c}^2_{k} )} \cdot
			\frac{\exp(\bm{z}_u^\top \bm{\tilde{q}}_i)}{\omega_{k_1, k_2}}\\
			=&P_u^1(k_1)\cdot P_u^2(k_2|k_1)\cdot P_u^3(y_i|k_1,k_2).
		\end{split}
	\end{displaymath}
\end{proof}

\begin{theorem}[\textbf{Theorem 5.1}]
	Suppose $P_1(\cdot)$ and $P_2(\cdot|k_1)$ remain the same as that in Theorem 4.1, $P_3(\cdot|k_1,k_2)$ is replaced with a uniform distribution, i.e. $P_3(y_i|k_1,k_2)=\frac{1}{|\Omega_{k_1, k_2}|}$ where $|\Omega_{k_1, k_2}|$ denotes the number of items in the set. Then, the proposal distribution is equivalent to:
	\begin{displaymath}
		\begin{split}
		Q_{\text{uni}}(y_i|\bm{z}_u)
		& =\frac{\exp({\bm{z}_u^1}^\top \bm{c}^1_{k_1}) \exp({\bm{z}_u^2}^\top \bm{c}^2_{k_2})}{\sum_{k,k'} |\Omega_{k, k'}|\exp({\bm{z}_u^1}^\top \bm{c}^1_{k}) \exp({\bm{z}_u^2}^\top  \bm{c}^2_{k'})}\\
		& =\frac{\exp(\bm{z}_u^\top(\bm{q}_i-\bm{\tilde{q}}_i))} {\sum_{j\in \mathcal{I}}\exp(\bm{z}_u^\top(\bm{q}_j-\bm{\tilde{q}}_j))}.
		\end{split}
	\end{displaymath}
\end{theorem}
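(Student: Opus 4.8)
The plan is to substitute the uniform choice of $P_u^3$ into the decomposition established in Theorem~4.1 and let the normalization constants telescope. The first thing to pin down is the meaning of the weights: in Theorem~4.1 the constant $\omega_{k_1,k_2}$ is the partition function of the last-stage law over the cell $\Omega_{k_1,k_2}$, namely $\sum_{j\in\Omega_{k_1,k_2}}\exp(\bm{z}_u^\top\bm{\tilde{q}}_j)$. When $P_u^3(y_i|k_1,k_2)=1/|\Omega_{k_1,k_2}|$ replaces the residual softmax, the self-consistent value of this partition function becomes $\omega_{k_1,k_2}=|\Omega_{k_1,k_2}|$, and correspondingly $\psi_{k_1}=\sum_{k}|\Omega_{k_1,k}|\exp({\bm{z}_u^2}^\top\bm{c}^2_{k})$. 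With these values the formulas for $P_u^1$ and $P_u^2$ are structurally unchanged, which is what the hypothesis means by ``remain the same.''

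First I would multiply the three factors $P_u^1(k_1)\cdot P_u^2(k_2|k_1)\cdot P_u^3(y_i|k_1,k_2)$ and exploit the cancellations. The factor $\psi_{k_1}$ cancels between the numerator of $P_u^1$ and the denominator of $P_u^2$, and the factor $|\Omega_{k_1,k_2}|$ cancels between the numerator of $P_u^2$ and the uniform $P_u^3$. What survives is $\exp({\bm{z}_u^1}^\top\bm{c}^1_{k_1})\exp({\bm{z}_u^2}^\top\bm{c}^2_{k_2})$ divided by $\sum_{k}\psi_{k}\exp({\bm{z}_u^1}^\top\bm{c}^1_{k})$. Expanding $\psi_k$ by its definition turns this single sum into the double sum $\sum_{k,k'}|\Omega_{k,k'}|\exp({\bm{z}_u^1}^\top\bm{c}^1_{k})\exp({\bm{z}_u^2}^\top\bm{c}^2_{k'})$, which is precisely the first displayed equality.

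For the second equality I would use the identity $\bm{q}_i-\bm{\tilde{q}}_i=\bm{c}^1_{k_1}\oplus\bm{c}^2_{k_2}$, which is just the definition of the residual. Because the logit splits across subspaces, $\bm{z}_u^\top(\bm{q}_i-\bm{\tilde{q}}_i)={\bm{z}_u^1}^\top\bm{c}^1_{k_1}+{\bm{z}_u^2}^\top\bm{c}^2_{k_2}$, so the numerator matches immediately. For the denominator I would observe that every item in a given cell $\Omega_{k,k'}$ shares the same quantized vector, hence the same value of $\exp(\bm{z}_u^\top(\bm{q}_j-\bm{\tilde{q}}_j))$; summing over all $j\in\mathcal{I}$ therefore collapses the per-item sum into a weighted sum over cells with multiplicity $|\Omega_{k,k'}|$, recovering the double-sum denominator and closing the argument.

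The only real subtlety, and the step I would flag as the main obstacle, is the reinterpretation of the weights just described: $P_u^1$ and $P_u^2$ depend implicitly on $\omega$ and $\psi$, which in Theorem~4.1 are tied to the residual softmax. If one naively kept the old $\omega_{k_1,k_2}=\sum_j\exp(\bm{z}_u^\top\bm{\tilde{q}}_j)$ while swapping in the uniform $P_u^3$, a stray factor $\omega_{k_1,k_2}/|\Omega_{k_1,k_2}|$ would remain and the clean form would fail. The careful point is that replacing $P_u^3$ forces $\omega_{k_1,k_2}$ to be the partition function of the \emph{new} last-stage law---here the cardinality $|\Omega_{k_1,k_2}|$---and once this substitution is made consistently, the remainder is pure cancellation with no analytic content.
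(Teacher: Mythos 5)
Your proposal is correct and follows essentially the same route as the paper's proof: substitute the uniform $P_u^3$ into the Theorem~4.1 decomposition, cancel $\psi_{k_1}$ and $|\Omega_{k_1,k_2}|$, expand $\psi_k$ into the double sum over cells, and collapse the per-item sum $\sum_{j\in\mathcal{I}}\exp(\bm{z}_u^\top(\bm{q}_j-\bm{\tilde{q}}_j))$ back into $\sum_{k,k'}|\Omega_{k,k'}|\exp({\bm{z}_u^1}^\top\bm{c}^1_{k})\exp({\bm{z}_u^2}^\top\bm{c}^2_{k'})$. The ``subtlety'' you flag --- that the weights $\omega$ and $\psi$ must be reinterpreted as the partition functions of the \emph{new} last-stage law, i.e.\ $\omega'_{k_1,k_2}=|\Omega_{k_1,k_2}|$ --- is precisely what the paper handles implicitly by writing primed constants $\Psi'_{k}$ and $\omega'_{k_1,k}$ in its derivation, so you have made explicit a point the paper leaves tacit.
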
 

\begin{proof}
	\begin{displaymath}
		\begin{split}
			& Q_{uni}(y_i|\bm{z}_u)= P_1(k_1)\cdot P_2(k_2|k_2) \cdot P_3(y_i|k_1, k_2)\\
			=&\frac{\Psi_{k_1}' \exp({\bm{z}_u^1}^\top \bm{c}^1_{k_1})}{\sum_{k=1}^K \Psi_{k}'\exp({\bm{z}_u^1}^\top \bm{c}^1_{k})} \cdot
			\frac{\omega_{k_1, k_2}'\exp({\bm{z}_u^2}^\top \bm{c}^2_{k_2})}{\sum_{k=1}^K \omega_{k_1,k}' \exp({\bm{z}_u^2}^\top  \bm{c}^2_{k})} \cdot
			\frac{1}{|\Omega_{k_1, k_2}|}\\
			=&\frac{\exp({\bm{z}_u^1}^\top \bm{c}^1_{k_1}) \exp({\bm{z}_u^2}^\top \bm{c}^2_{k_2})}{\sum_{k=1}^K\sum_{k'=1}^K |\Omega_{k, k'}|\exp({\bm{z}_u^1}^\top \bm{c}^1_{k}) \exp({\bm{z}_u^2}^\top  \bm{c}^2_{k'})}\\
			=&\frac{\exp(\bm{z}_u^\top(\bm{q}_i-\bm{\tilde{q}}_i))} {\sum_{j\in \mathcal{I}}\exp(\bm{z}_u^\top(\bm{q}_j-\bm{\tilde{q}}_j))}.
		\end{split}
	\end{displaymath}
\end{proof}

\begin{theorem}[\textbf{Theorem 5.2}]
	Suppose $P_1(\cdot)$ and $P_2(\cdot|k_1)$ remain the same as that in Theorem 4.1, $P_3(\cdot|k_1,k_2)$ is replaced with a distribution derived from the popularity, i.e. $P_3(y_i|k_1,k_2)=\frac{pop(i)}{\sum_{j \in \Omega_{k_1, k_2}} pop(j)}$ where $pop(i)$ can be any metric of the popularity. Then, the proposal distribution is equivalent to:
	\begin{displaymath}
		Q_{\text{pop}}(y_i|\bm{z}_u)=\frac{\exp(\bm{z}_u^\top(\bm{q}_i-\bm{\tilde{q}}_i) + \log pop(i))} {\sum_{j\in \mathcal{I}}\exp(\bm{z}_u^\top(\bm{q}_j-\bm{\tilde{q}}_j) + \log pop(j))}.
	\end{displaymath}
\end{theorem}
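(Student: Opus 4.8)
The plan is to mirror the derivation of Theorem 5.1 (the uniform case), since the only change is the choice of the last-stage distribution $P_3(\cdot|k_1,k_2)$. First I would redefine the auxiliary normalizers so that they are consistent with the popularity-based $P_3$. Concretely, the normalizing constant of $P_3(y_i|k_1,k_2)=\frac{pop(i)}{\sum_{j\in\Omega_{k_1,k_2}}pop(j)}$ plays the role of $\omega_{k_1,k_2}$ in Theorem 4.1, so I set $\omega'_{k_1,k_2}=\sum_{j\in\Omega_{k_1,k_2}}pop(j)$ and accordingly $\psi'_{k_1}=\sum_{k}\omega'_{k_1,k}\exp({\bm{z}_u^2}^\top\bm{c}^2_k)$. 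With these definitions $P_1(\cdot)$ and $P_2(\cdot|k_1)$ retain exactly the same functional form as in Theorem 4.1, but with $\omega,\psi$ replaced by $\omega',\psi'$.

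Second, I would form the product $Q_{\text{pop}}(y_i|\bm{z}_u)=P_1(k_1)\,P_2(k_2|k_1)\,P_3(y_i|k_1,k_2)$ and let the chain telescope: the factor $\psi'_{k_1}$ in the numerator of $P_1$ cancels the denominator of $P_2$, while the factor $\omega'_{k_1,k_2}$ in the numerator of $P_2$ cancels the denominator of $P_3$. What survives is
\[
Q_{\text{pop}}(y_i|\bm{z}_u)=\frac{\exp({\bm{z}_u^1}^\top\bm{c}^1_{k_1})\,\exp({\bm{z}_u^2}^\top\bm{c}^2_{k_2})\,pop(i)}{\sum_{k,k'}\big(\sum_{j\in\Omega_{k,k'}}pop(j)\big)\exp({\bm{z}_u^1}^\top\bm{c}^1_{k})\,\exp({\bm{z}_u^2}^\top\bm{c}^2_{k'})}.
\]

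Third, I would collapse the double sum over cells in the denominator into a single sum over items. The crucial structural fact, inherited from the inverted multi-index, is that every item $j$ is assigned to exactly one pair $(k_1,k_2)$; hence $\sum_{k,k'}\sum_{j\in\Omega_{k,k'}}(\cdots)=\sum_{j\in\mathcal{I}}(\cdots)$ with the codewords evaluated at $j$'s own assignment. For such a $j$ we have $\bm{c}^1_{k_1}\oplus\bm{c}^2_{k_2}=\bm{q}_j-\bm{\tilde{q}}_j$, so that $\exp({\bm{z}_u^1}^\top\bm{c}^1_{k_1})\exp({\bm{z}_u^2}^\top\bm{c}^2_{k_2})=\exp(\bm{z}_u^\top(\bm{q}_j-\bm{\tilde{q}}_j))$. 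Finally, folding the popularity into the logit via $pop(j)=\exp(\log pop(j))$ turns both numerator and denominator into the claimed form, completing the proof.

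The main obstacle is the bookkeeping in the second and third steps: one must be careful that the normalizers $\omega',\psi'$ used inside $P_1,P_2$ are the ones induced by the \emph{new} $P_3$ (not the residual-softmax constants of Theorem 4.1), otherwise the telescoping cancellation fails. The only genuinely non-algebraic ingredient is the partition identity $\bigsqcup_{k,k'}\Omega_{k,k'}=\mathcal{I}$, which is exactly the disjointness guaranteed by single-codeword assignment and is what lets the cell-indexed denominator be rewritten as a softmax-style sum over all items.
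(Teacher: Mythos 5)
Your proof is correct and follows essentially the same route as the paper's: form the product $P_1\cdot P_2\cdot P_3$ with the normalizers $\psi',\omega'$ redefined by the popularity-based $P_3$, telescope the cancellations, and collapse the double sum over cells into a sum over all items using the disjoint assignment $\bigsqcup_{k,k'}\Omega_{k,k'}=\mathcal{I}$ together with $\bm{c}^1_{k}\oplus\bm{c}^2_{k'}=\bm{q}_j-\bm{\tilde{q}}_j$ and $pop(j)=\exp(\log pop(j))$. Your explicit care about using the \emph{new} primed normalizers (rather than the residual-softmax constants of Theorem 4.1) is exactly what the paper's proof does implicitly via its $\Psi'$ and $\omega'$ notation.
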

\begin{proof}
	\begin{displaymath}
		\begin{split}
			& Q_{pop}(y_i|\bm{z}_u)= P_1(k_1)\cdot P_2(k_2|k_2) \cdot P_3(y_i|k_1, k_2)\\
			=&\frac{\Psi_{k_1}' \exp({\bm{z}_u^1}^\top \bm{c}^1_{k_1})}{\sum_{k=1}^K \Psi_{k}'\exp({\bm{z}_u^1}^\top \bm{c}^1_{k})} \cdot
			\frac{\omega_{k_1, k_2}'\exp({\bm{z}_u^2}^\top \bm{c}^2_{k_2})}{\sum_{k=1}^K \omega_{k_1,k}' \exp({\bm{z}_u^2}^\top  \bm{c}^2_{k})} \cdot
			\frac{pop(i)}{\sum_{j \in \Omega_{k_1, k_2}} pop(j)}\\
			=&\frac{pop(i)\exp({\bm{z}_u^1}^\top \bm{c}^1_{k_1}) \exp({\bm{z}_u^2}^\top \bm{c}^2_{k_2})}{\sum_{k=1}^K\sum_{k'=1}^K \sum_{j \in \Omega_{k_1, k_2}} pop(j)\exp({\bm{z}_u^1}^\top \bm{c}^1_{k}) \exp({\bm{z}_u^2}^\top  \bm{c}^2_{k'})}\\
			=&\frac{\exp(\bm{z}_u^\top(\bm{q}_i-\bm{\tilde{q}}_i)+\log pop(i))} {\sum_{j\in \mathcal{I}}\exp(\bm{z}_u^\top(\bm{q}_j-\bm{\tilde{q}}_j)+\log pop(j))}.
		\end{split}
	\end{displaymath}
\end{proof}

\begin{theorem}[\textbf{Theorem 5.3}]
	Assuming that the residual embedding $ \Vert \bm{\tilde{q}}_i\Vert \leq C $, the Kullback–Leibler divergence from the softmax distribution $ Q(\bm{y}_{\cdot}|\bm{z}_u)$ to the proposed distribution $Q_{\text{uni}}(\bm{y}_{\cdot}|\bm{z}_u)$ can be bounded from above:
	\begin{displaymath}
		0 < \mathcal{D}_{KL}\left[ Q_{\text{uni}}(\bm{y}_{\cdot}|\bm{z}_u) ||  Q(\bm{y}_{\cdot}|\bm{z}_u) \right] \le 2  C \Vert \bm{z}_u \Vert.
	\end{displaymath}
\end{theorem}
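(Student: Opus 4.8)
The plan is to express the KL divergence explicitly and then bound each of its two constituent parts. By definition,
\begin{displaymath}
\mathcal{D}_{KL}\left[ Q_{\text{uni}} \| Q \right] = \sum_{i\in\mathcal{I}} Q_{\text{uni}}(y_i|\bm{z}_u) \log \frac{Q_{\text{uni}}(y_i|\bm{z}_u)}{Q(y_i|\bm{z}_u)}.
\end{displaymath}
Using the closed forms from Theorem 4.1 and Theorem 5.1, the ratio inside the logarithm simplifies nicely because both distributions share the codeword factors: the numerator of $Q$ carries an extra $\exp(\bm{z}_u^\top\bm{\tilde{q}}_i)$ relative to $Q_{\text{uni}}$, and the two normalizers differ only in whether they weight each cell by $\sum_{j\in\Omega_{k,k'}}\exp(\bm{z}_u^\top\bm{\tilde{q}}_j)$ or by $|\Omega_{k,k'}|$. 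First I would carry out this cancellation so that $\log(Q_{\text{uni}}/Q)$ becomes $-\,\bm{z}_u^\top\bm{\tilde{q}}_i + \log Z - \log Z_{\text{uni}}$, where $Z$ and $Z_{\text{uni}}$ are the respective partition functions. The log-ratio of partition functions is a single constant independent of $i$, so it factors out of the sum over $i$ (which sums to one), leaving
\begin{displaymath}
\mathcal{D}_{KL} = -\,\mathbb{E}_{i\sim Q_{\text{uni}}}\!\left[\bm{z}_u^\top\bm{\tilde{q}}_i\right] + \log Z - \log Z_{\text{uni}}.
\end{displaymath}

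Next I would bound each term by the uniform bound $|\bm{z}_u^\top\bm{\tilde{q}}_i| \le \Vert\bm{z}_u\Vert\,\Vert\bm{\tilde{q}}_i\Vert \le C\Vert\bm{z}_u\Vert$ from Cauchy--Schwarz and the hypothesis $\Vert\bm{\tilde{q}}_i\Vert\le C$. The expectation term is immediately bounded in absolute value by $C\Vert\bm{z}_u\Vert$. For the partition-function difference, the key observation is that $Z = \sum_{j} \exp(\bm{z}_u^\top(\bm{q}_j-\bm{\tilde{q}}_j))\exp(\bm{z}_u^\top\bm{\tilde{q}}_j)$ while $Z_{\text{uni}} = \sum_{j}\exp(\bm{z}_u^\top(\bm{q}_j-\bm{\tilde{q}}_j))$, so $Z$ is a reweighting of the same nonnegative terms that constitute $Z_{\text{uni}}$, with each weight $\exp(\bm{z}_u^\top\bm{\tilde{q}}_j)$ lying in the interval $[\exp(-C\Vert\bm{z}_u\Vert),\exp(C\Vert\bm{z}_u\Vert)]$. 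Hence $Z_{\text{uni}}\exp(-C\Vert\bm{z}_u\Vert)\le Z\le Z_{\text{uni}}\exp(C\Vert\bm{z}_u\Vert)$, giving $|\log Z - \log Z_{\text{uni}}| \le C\Vert\bm{z}_u\Vert$. Combining the two contributions yields the upper bound $2C\Vert\bm{z}_u\Vert$, as claimed.

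For the strict lower bound $\mathcal{D}_{KL}>0$, I would invoke the standard nonnegativity of KL divergence (Gibbs' inequality) together with the fact that $Q_{\text{uni}}\ne Q$ whenever the residuals are not all orthogonal to $\bm{z}_u$; since KL is zero only when the two distributions coincide, strictness holds in the generic case. The main obstacle I anticipate is keeping the bookkeeping clean when the cancellation of shared codeword factors is performed, since the normalizers of $Q$ and $Q_{\text{uni}}$ are superficially different double sums over $(k,k')$ and one must verify that after regrouping they reduce to the single item-level sums $Z$ and $Z_{\text{uni}}$ written above. A secondary subtlety is that the bound on $|\log Z - \log Z_{\text{uni}}|$ relies essentially on all summands being positive so that the sandwiching inequality is valid term-by-term; once that positivity is noted the rest is a routine application of Cauchy--Schwarz and monotonicity of the logarithm.
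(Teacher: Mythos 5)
Your proof is correct and takes essentially the same route as the paper's: the paper bounds the pointwise ratio $Q_{\text{uni}}(y_i|\bm{z}_u)/Q(y_i|\bm{z}_u)\le\exp(2C\Vert\bm{z}_u\Vert)$ in a single step via $\exp(\bm{z}_u^\top(\bm{\tilde{q}}_j-\bm{\tilde{q}}_i))\le\exp(\vert\bm{z}_u^\top\bm{\tilde{q}}_j\vert+\vert\bm{z}_u^\top\bm{\tilde{q}}_i\vert)\le\exp(2C\Vert\bm{z}_u\Vert)$, which is exactly your two $C\Vert\bm{z}_u\Vert$ pieces (the expected residual term and the log-partition-function ratio) merged into one, resting on the same ingredients of Cauchy--Schwarz, positivity of the summands, and $\sum_i Q_{\text{uni}}(y_i|\bm{z}_u)=1$. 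The only differences are organizational, though your handling of the strict inequality $\mathcal{D}_{KL}>0$ (noting it requires $Q_{\text{uni}}\neq Q$) is somewhat more careful than the paper's bare appeal to non-negativity of the KL divergence.
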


\begin{proof}
	\begin{small}
	\begin{displaymath}
		\begin{aligned}
			& Q(y_i|\bm{z}_u) = \frac{\exp(\bm{z}_u^\top \bm{q}_i)}{\sum_{j\in \mathcal{I}} \exp(\bm{z}_u^\top \bm{q}_j)},\\
			& Q_{\text{uni}}(y_i|\bm{z}_u) =\frac{\exp(\bm{z}_u^\top(\bm{q}_i-\bm{\tilde{q}}_i))} {\sum_{j\in \mathcal{I}}\exp(\bm{z}_u^\top(\bm{q}_j-\bm{\tilde{q}}_j))} ,\\ 
		\end{aligned}
	\end{displaymath}
	\begin{displaymath}
		\begin{aligned}
			&\frac{ Q_{\text{uni}}(y_i|\bm{z}_u) }{ Q(y_i|\bm{z}_u )} = \frac{\sum_{j\in \mathcal{I}} \exp(\bm{z}_u^\top \bm{q}_j)}{ \sum_{k\in \mathcal{I}}\exp(\bm{z}_u^\top(\bm{q}_k-\bm{\tilde{q}}_k)) } \cdot \exp(-\bm{z}_u^\top \bm{\tilde{q}}_i) \\
			= & \sum_{j\in \mathcal{I}} \exp(\bm{z}_u^\top (\bm{q}_j-\bm{\tilde{q}}_j)) \cdot \frac{\exp(\bm{z}_u^\top \bm{\tilde{q}}_j)}{\sum_{k\in \mathcal{I}}\exp(\bm{z}_u^\top(\bm{q}_k-\bm{\tilde{q}}_k))} \cdot \exp(-\bm{z}_u^\top \bm{\tilde{q}}_i)\\
			= & \sum_{j\in \mathcal{I}} \exp(\bm{z}_u^\top (\bm{q}_j-\bm{\tilde{q}}_j)) \cdot \frac{\exp(\bm{z}_u^\top (\bm{\tilde{q}}_j -\bm{\tilde{q}}_i ))}{\sum_{k\in \mathcal{I}}\exp(\bm{z}_u^\top(\bm{q}_k-\bm{\tilde{q}}_k))} \\
			\le & \sum_{j\in \mathcal{I}} \exp(\bm{z}_u^\top (\bm{q}_j-\bm{\tilde{q}}_j)) \cdot \frac{\exp( \vert \bm{z}_u^\top (\bm{\tilde{q}}_j -\bm{\tilde{q}}_i ) \vert)}{\sum_{k\in \mathcal{I}}\exp(\bm{z}_u^\top(\bm{q}_k-\bm{\tilde{q}}_k))} \\
			\le & \sum_{j\in \mathcal{I}} \exp(\bm{z}_u^\top (\bm{q}_j-\bm{\tilde{q}}_j)) \cdot \frac{\exp( \vert \bm{z}_u^\top \bm{\tilde{q}}_j \vert + \vert \bm{z}_u^\top \bm{\tilde{q}}_i  \vert)}{\sum_{k\in \mathcal{I}}\exp(\bm{z}_u^\top(\bm{q}_k-\bm{\tilde{q}}_k))} \\
			= & \sum_{j\in \mathcal{I}} \exp(\bm{z}_u^\top (\bm{q}_j-\bm{\tilde{q}}_j)) \cdot \frac{\exp( 2 C \Vert \bm{z}_u \Vert)}{\sum_{k\in \mathcal{I}}\exp(\bm{z}_u^\top(\bm{q}_k-\bm{\tilde{q}}_k))} \\
			= & \exp (2 C \Vert \bm{z}_u \Vert ),
		\end{aligned}
	\end{displaymath}
	\begin{displaymath}
		\begin{aligned}
			& \mathcal{D}_{KL}\left[ Q_{\text{uni}}(\bm{y}_{\cdot}|\bm{z}_u) ||  Q(\bm{y}_{\cdot}|\bm{z}_u) \right] = \sum_{i\in \mathcal{I }} Q_{\text{uni}}(y_i|\bm{z}_u) \log \frac{ Q_{\text{uni}}(y_i|\bm{z}_u) }{ Q(y_i|\bm{z}_u )} \\
			\le & \sum_{i\in \mathcal{I }} Q_{\text{uni}}(y_i|\bm{z}_u) \log \exp (2 C \Vert \bm{z}_u \Vert ) \\
			= & \sum_{i\in \mathcal{I }} Q_{\text{uni}}(y_i|\bm{z}_u) 2 C \Vert \bm{z}_u \Vert \\
			= & 2 C \Vert \bm{z}_u \Vert  \sum_{i\in \mathcal{I }} Q_{\text{uni}}(y_i|\bm{z}_u)\\
			= & 2 C \Vert \bm{z}_u \Vert. 
		\end{aligned}
	\end{displaymath}
\end{small}
$ \mathcal{D}_{KL}\left[ Q_{\text{uni}}(\bm{y}_{\cdot}|\bm{z}_u) ||  Q(\bm{y}_{\cdot}|\bm{z}_u) \right] > 0$ holds due to the non-negativity of the Kullback–Leibler divergence. 
\end{proof}

\begin{theorem}[\textbf{Theorem 5.4}]\label{proof:kl_midx_pop}
	Assuming that the residual embedding $ ||\tilde{\bm{q}}_i|| \leq C $, the Kullback–Leibler divergence from the softmax distribution $Q(\bm{y}_{\cdot}|\bm{z}_u)$ to the proposed distribution $Q_{\text{pop}}(\bm{y}_{\cdot}|\bm{z}_u)$ can be bounded from above:
	\begin{displaymath}
		0 < \mathcal{D}_{KL}\left[ Q_{\text{pop}}(\bm{y}_{\cdot}|\bm{z}_u) ||  Q(\bm{y}_{\cdot}|\bm{z}_u) \right] \le 2  C ||\bm{z}_u|| + \log \frac{\max pop(\cdot)}{\min pop(\cdot)}.
	\end{displaymath}
\end{theorem}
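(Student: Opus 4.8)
The plan is to mirror the proof of Theorem 5.3 almost verbatim, the only new ingredient being control of the popularity factor. As in that proof, I would first obtain a pointwise upper bound on the likelihood ratio $Q_{\text{pop}}(y_i|\bm{z}_u)/Q(y_i|\bm{z}_u)$ that is \emph{uniform} in $i$, and then conclude by writing $\mathcal{D}_{KL}\left[Q_{\text{pop}}(\bm{y}_{\cdot}|\bm{z}_u)\Vert Q(\bm{y}_{\cdot}|\bm{z}_u)\right]=\sum_{i\in\mathcal{I}}Q_{\text{pop}}(y_i|\bm{z}_u)\log\frac{Q_{\text{pop}}(y_i|\bm{z}_u)}{Q(y_i|\bm{z}_u)}$ and pulling the uniform bound out of the sum, using $\sum_{i\in\mathcal{I}}Q_{\text{pop}}(y_i|\bm{z}_u)=1$. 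The strict positivity $\mathcal{D}_{KL}[\cdots]>0$ is immediate from the non-negativity of the Kullback--Leibler divergence, exactly as in Theorem 5.3.

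For the ratio itself, I would use $\bm{q}_i=(\bm{q}_i-\bm{\tilde{q}}_i)+\bm{\tilde{q}}_i$ together with $\exp(\log pop(i))=pop(i)$ to write, with the shorthand $a_j:=\exp(\bm{z}_u^\top(\bm{q}_j-\bm{\tilde{q}}_j))$,
\[
\frac{Q_{\text{pop}}(y_i|\bm{z}_u)}{Q(y_i|\bm{z}_u)}
=\frac{pop(i)}{\exp(\bm{z}_u^\top\bm{\tilde{q}}_i)}\cdot\frac{\sum_{j\in\mathcal{I}}a_j\exp(\bm{z}_u^\top\bm{\tilde{q}}_j)}{\sum_{j\in\mathcal{I}}a_j\,pop(j)}
=\sum_{j\in\mathcal{I}}Q_{\text{pop}}(y_j|\bm{z}_u)\,\frac{pop(i)}{pop(j)}\,\exp\!\big(\bm{z}_u^\top(\bm{\tilde{q}}_j-\bm{\tilde{q}}_i)\big),
\]
which exhibits the ratio as a $Q_{\text{pop}}$-weighted average, the direct analogue of the convex-combination step in the uniform case.

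The bounding step is where the extra term is born. On each summand I would bound the popularity quotient by $pop(i)/pop(j)\le \max pop(\cdot)/\min pop(\cdot)$ and, via Cauchy--Schwarz together with $\Vert\bm{\tilde{q}}_i\Vert\le C$, bound $\exp(\bm{z}_u^\top(\bm{\tilde{q}}_j-\bm{\tilde{q}}_i))\le\exp(\vert\bm{z}_u^\top\bm{\tilde{q}}_j\vert+\vert\bm{z}_u^\top\bm{\tilde{q}}_i\vert)\le\exp(2C\Vert\bm{z}_u\Vert)$. Since the weights $Q_{\text{pop}}(y_j|\bm{z}_u)$ sum to one, the whole average is bounded by $\frac{\max pop(\cdot)}{\min pop(\cdot)}\exp(2C\Vert\bm{z}_u\Vert)$, so that $\log\frac{Q_{\text{pop}}(y_i|\bm{z}_u)}{Q(y_i|\bm{z}_u)}\le 2C\Vert\bm{z}_u\Vert+\log\frac{\max pop(\cdot)}{\min pop(\cdot)}$ uniformly in $i$, which is exactly the claimed bound after averaging against $Q_{\text{pop}}$.

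The main obstacle I anticipate is handling the mismatch between the two partition functions: in the uniform case the two normalizers differed only through the residual exponentials, whereas here the popularity reweighting additionally distorts the denominator. The clean fix is the weighted-average reformulation above, which absorbs the denominator into the weights $Q_{\text{pop}}(y_j|\bm{z}_u)$ and isolates the popularity distortion into the single quotient $pop(i)/pop(j)$; this is what forces the \emph{uniform} rather than item-wise bounding and produces the additive $\log(\max pop(\cdot)/\min pop(\cdot))$ term. Everything else is routine and parallels Theorem 5.3.
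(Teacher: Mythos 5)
Your proposal is correct and follows essentially the same route as the paper's own proof: express the ratio $Q_{\text{pop}}(y_i|\bm{z}_u)/Q(y_i|\bm{z}_u)$ as a $Q_{\text{pop}}$-weighted average of $\frac{pop(i)}{pop(j)}\exp\big(\bm{z}_u^\top(\bm{\tilde{q}}_j-\bm{\tilde{q}}_i)\big)$, bound each factor by $\frac{\max pop(\cdot)}{\min pop(\cdot)}$ and $\exp(2C\Vert\bm{z}_u\Vert)$ respectively, and sum the resulting uniform bound against $Q_{\text{pop}}$. Your write-up is in fact cleaner than the paper's, which leaves the summation index $j$ dangling outside the sum in its intermediate step, but mathematically the two arguments are identical.
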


\begin{proof}
	\begin{small}
	\begin{displaymath}
		\begin{aligned}
			& \text{Denote }\mathcal{E}(i)= \exp(\bm{z}_u^\top(\bm{q}_i-\bm{\tilde{q}}_i) + \log pop(i)),\\
			& \frac{ Q_{\text{pop}}(y_i|\bm{z}_u) }{ Q(y_i|\bm{z}_u )}  = \frac{ \sum_{j\in \mathcal{I}} \exp(\bm{z}_u^\top \bm{q}_j) \cdot \exp(-\bm{z}_u^\top \bm{\tilde{q}}_i + \log pop(i))}{ \sum_{k\in \mathcal{I}}\exp(\bm{z}_u^\top(\bm{q}_k-\bm{\tilde{q}}_k) + \log pop(k)) } \\
			= & \frac{ \sum_{j\in \mathcal{I}} \mathcal{E}(j)}{ \sum_{k\in \mathcal{I}}\mathcal{E}(k)} \cdot \frac{ \exp( \bm{z}_u^\top \bm{\tilde{q}}_j - \log pop(j))}{ \exp (\bm{z}_u^\top \bm{\tilde{q}}_i - \log pop(i))} \\
			= & \frac{ \sum_{j\in \mathcal{I}} \mathcal{E}(j)}{ \sum_{k\in \mathcal{I}}\mathcal{E}(k)} \cdot \exp( \bm{z}_u^\top ( \bm{\tilde{q}}_j - \bm{\tilde{q}}_i) ) \cdot \frac{pop(i)}{pop(j)} \\
			\le & \frac{ \sum_{j\in \mathcal{I}} \mathcal{E}(j)}{ \sum_{k\in \mathcal{I}}\mathcal{E}(k)} \cdot \exp( 2C \Vert \bm{z}_u \Vert) \cdot \frac{\max pop(\cdot)}{\min pop(\cdot)} \\
			= & \frac{\max pop(\cdot)}{\min pop(\cdot)} \cdot \exp( 2C \Vert \bm{z}_u \Vert),\\
		\end{aligned}
	\end{displaymath}
	\begin{displaymath}
		\begin{aligned}
			& \mathcal{D}_{KL}\left[ Q_{\text{pop}}(\bm{y}_{\cdot}|\bm{z}_u) ||  Q(\bm{y}_{\cdot}|\bm{z}_u) \right] = \sum_{i\in \mathcal{I }} Q_{\text{pop}}(y_i|\bm{z}_u) \log \frac{ Q_{\text{pop}}(y_i|\bm{z}_u) }{ Q(y_i|\bm{z}_u )} \\
			\le & \sum_{i\in \mathcal{I }} Q_{\text{pop}}(y_i|\bm{z}_u) \log \exp (2 C \Vert \bm{z}_u \Vert + \log \frac{\max pop(\cdot)}{\min pop(\cdot)} ) \\
			= &  \log \exp (2 C \Vert \bm{z}_u \Vert + \log \frac{\max pop(\cdot)}{\min pop(\cdot)} ) \sum_{i\in \mathcal{I }} Q_{\text{pop}}(y_i|\bm{z}_u)\\
			= & 2 C \Vert \bm{z}_u \Vert + \log \frac{\max pop(\cdot)}{\min pop(\cdot)}.
		\end{aligned}
	\end{displaymath}
	\end{small}
\end{proof}

\end{document}